\documentclass{article}

\PassOptionsToPackage{numbers, compress}{natbib}

 \usepackage[preprint]{neurips_2025}

\usepackage[utf8]{inputenc} 
\usepackage[T1]{fontenc}    
\usepackage[hypertexnames=false]{hyperref}       
\usepackage{url}            
\usepackage{booktabs}       
\usepackage{amsfonts}       
\usepackage{nicefrac}       
\usepackage{microtype}      
\usepackage{xcolor}         
\usepackage{amsmath}
\usepackage{amsthm}
\usepackage{amssymb}
\usepackage{booktabs}
\usepackage{makecell}

\usepackage{subcaption}
\usepackage{wrapfig}
\usepackage{algorithm}
\usepackage[noend]{algorithmic}
\usepackage{tikz}
\usepackage{enumitem}
\usepackage{graphicx}
\usepackage[export]{adjustbox}
\usetikzlibrary{arrows,automata,positioning}
\usetikzlibrary{fit,shapes}

\newcommand{\ignore}[1]{}
\newcommand{\LLL}{{\cal L}}

\newcommand{\YYY}{{\cal Y}}
\newcommand{\any}{\Sigma}
\newcommand{\lwa}{\mathbb{W}}
\newcommand{\axp}{\mathsf{AXp}}
\newcommand{\cxp}{\mathsf{CXp}}
\newcommand{\ffa}{\mathsf{FFA}}
\newcommand{\nosize}[1]{}

\newcommand{\pjs}[1]{\textcolor{blue}{\textsc{Pjs:} #1}}

\newcommand{\noffa}[1]{}
\renewcommand{\noffa}[1]{#1}

\newtheorem{example}{Example}
\newtheorem{definition}{Definition}
\newtheorem{proposition}{Proposition}

\title{A Formal Framework for the Explanation of \\ Finite Automata Decisions}

\author{%
  Jaime Cuartas Granada \quad Alexey Ignatiev \quad Peter J. Stuckey \\
  Department of Data Science and AI, Faculty of IT\\
  Monash University, Melbourne, Victoria, Australia \\
  \texttt{\{jaime.cuartasgranada, alexey.ignatiev, peter.stuckey\}@monash.edu} \\
}

\begin{document}

\maketitle

\begin{abstract}
  Finite automata (FA) are a fundamental computational abstraction
    that is widely used in practice for various tasks in computer
    science, linguistics, biology, electrical engineering, and
    artificial intelligence.
    Given an input word, an FA maps the word to a result, in the
    simple case ``accept'' or ``reject'', but in general to one of a
    finite set of results.
    A question that then arises is: why?
    Another question is: how can we modify the input word so that it
    is no longer accepted?
    %
    %
    %
    One may think that the automaton itself is an adequate explanation
    of its behaviour, but automata can be very complex and difficult
    to make sense of directly.
    In this work, we investigate how to explain the behaviour of an FA
    on an input word in terms of the word's characters.
    In particular, we are interested in \emph{minimal} explanations:
    what is the minimal set of input characters that explains the
    result, and what are the minimal changes needed to alter the
    result?
    Note that multiple minimal explanations can arise in both cases.
    In this paper, we propose an efficient method to determine all
    minimal explanations for the behaviour of an FA on a particular
    word. This allows us to give unbiased explanations about which input
    features are responsible for the result.
    Experiments show that our approach scales well, even when the
    underlying problem is challenging.
\end{abstract}

\section{Introduction}

With the rise of Artificial Intelligence, we have seen increasing use
of black-box systems to make decisions.
This in turn has led to the demand for eXplainable Artificial
Intelligence (XAI)~\cite{miller-aij19,BARREDOARRIETA202082,xrlSurvey}
where the decisions of these systems must be explained to a person
either affected by or implementing the decisions.
While much of the work on XAI has a very fuzzy or adhoc definition of
an \emph{explanation} for a decision, Formal Explainable Artificial
Intelligence (FXAI)~\cite{Darwiche2023,delivering_trust}, demands
rigorous explanations that satisfy formal properties, including
usually some form of minimality.

The demand for explanations of behaviour is not only a question of
interest for black-box systems, but in fact any system that makes a
decision, including those that are deemed inherently
interpretable~\cite{msi-fai23}.
Motivated by the widespread use of finite automata (FA), this paper
examines how to \emph{formally} explain the result of their operation.
While FA are sometimes seen as one of the simplest models of
computation, their inner workings are often far from obvious.
Understanding their behavior is beneficial in domains where FA are
used widely as components of more complex tasks, e.g. string
searching, pattern matching, lexical analysis, or deep packet
inspection~\cite{Dpi_survey}.
Furthermore, agent policies in reinforcement learning, once trained,
are often represented as finite automata; the need to understand why a
particular action was taken by the agent in a given situation and
whether or not a different outcome could be reached by taking another
action further underscores the need for FA explanations.

In light of the above, given an automaton $\mathcal{A}$ and an input string $w$,
where $\mathcal{A}$ applied to $w$ returns  $r \in
\{\text{accept},\text{reject}\}$;
this paper builds on the apparatus of~\cite{delivering_trust} to
answer the following questions \emph{formally}:
\begin{itemize}
  \item \emph{Abductive explanation}: why does $\mathcal{A}$ return $r$
    for $w$, i.e. what features of $w$ are necessary to guarantee the
    result~$r$.
  \item  \emph{Contrastive explanation}: how can I modify $w$
    minimally obtaining $w'$ s.t.\ $\mathcal{A}$ returns a different result
    $r' \neq r$ on $w'$.
\end{itemize}

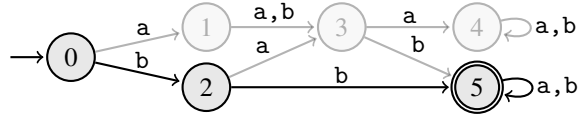
\begin{figure}[t]
  \centering
    \begin{tikzpicture}[shorten >=1pt, node distance=1cm, on grid, auto, initial text=, thick]
  \tikzstyle{every state}=[fill={rgb:black,1;white,10}, minimum size=18pt, inner sep=1pt]

  \node[state, initial]            (q0) {0};
  \node[state, opacity=0.3]        (q1) [right=1.8cm of q0, yshift=0.4cm] {1};
  \node[state]                     (q2) [right=1.8cm of q0, yshift=-0.4cm] {2};
  \node[state, opacity=0.3]        (q3) [right=1.8cm of q1] {3};
  \node[state, opacity=0.3]        (q4) [right=1.8cm of q3] {4};
  \node[state, accepting, double]  (q5) [right=3.6cm of q2] {5};

  \path[->]
    (q0) edge[draw opacity=0.3] node[xshift=8pt, yshift=-1pt] {\texttt{a}} (q1)
         edge node[xshift=-5pt, yshift=-2pt] {\texttt{b}} (q2)
    (q1) edge[draw opacity=0.3] node[yshift=-2pt] {\texttt{a,b}} (q3)
    (q2) edge node[yshift=-2pt] {\texttt{b}} (q5)
         edge[draw opacity=0.3] node[xshift=2pt, yshift=-2pt] {\texttt{a}} (q3)
    (q3) edge[draw opacity=0.3] node[yshift=-2pt] {\texttt{a}} (q4)
         edge[draw opacity=0.3] node[xshift=-3pt, yshift=-2pt] {\texttt{b}} (q5)
    (q4) edge[loop right, draw opacity=0.3] node[xshift=-2pt] {\texttt{a,b}} (q4)
    (q5) edge[loop right] node[xshift=-2pt] {\texttt{a,b}} (q5);
\end{tikzpicture}
    \caption{
      Input \texttt{bbbbb} is accepted by this automata.
      Opaque states indicate transitions that are not traversed
      for the input \texttt{bbbbb}.
      Two valid explanations are \texttt{\underline{bb}bbb} and
      (a shorter one) \texttt{bb\underline{b}bb}.}
  \label{fig:bbbbb_trace}
\end{figure}

Interestingly, while the latter question can be related to the
well-known problem of (and algorithms for) computing the \emph{minimum
edit distance}~\cite{wagner-cacm74}, no similar mechanisms exist for
answering the former `why' question.
Moreover, as formal abductive and contrastive explanations enjoy the
minimal hitting set duality relationship~\cite{delivering_trust}, it
is often instrumental in devising efficient algorithms for their
computation and enumeration, which further underscores the
advantage of formal explanations studied in this paper over the
minimal edit distance.
Also, note that many approaches to XAI actually consider finite
automata as explanations themselves \cite{re_understanding_FA,
deepsynth_fa}, using them as (approximate) explanations for black box
models.
But while FA are a simple computational mechanism, their behaviour
when viewed from outside may not be obvious, and indeed in our view of
explanations, we are trying to find features of the \emph{input} which
minimally explain the result of the automaton.

\begin{example}\label{ex:bbbbb}
  Consider the deterministic FA shown in Figure~\ref{fig:bbbbb_trace}.
  Observe that it accepts the input word \texttt{bbbbb}.
  How should we explain this?
  One obvious way is to trace the execution of the automaton on the
  input word.
  Clearly, reading the first two \texttt{b}'s leads to an accepting
  state, so an explanation could be \texttt{\underline{bb}bbb} where
  the underlined characters are those that explain the acceptance.
  But other (shorter) explanations exist, e.g.
  \texttt{bb\underline{b}bb} is a correct explanation, as any word of
  length 5 with a \texttt{b} in position 3 will be accepted.
  \qed
\end{example}

\section{Preliminaries} \label{sec:prelim}

\newcommand{\lfa}{\Sigma}

\subsection{Finite Automata and Regular Expressions}

Here, we adopt standard definitions and notations for \emph{finite
automata}~\cite{FA_and_their_decision_problems, FA_languages,
PERRIN19901}.
Let $\Sigma$ be a finite alphabet of symbols.
A \textit{word} may be the \textit{empty word} $\epsilon$ or a finite
sequence of symbols over the alphabet $\Sigma$ written as $w=w_1w_2\ldots
w_n$ s.t.\ $w_i\in \Sigma$ for all $1\leq i \leq n$.
Integer $n$ here is referred to as the \emph{length} of the word $w$,
i.e. $\lvert w\rvert=n$.
We use array notation to lookup symbol in a word, i.e. $w[i]$ is the
$i^{th}$ symbol in the string, with $1\leq i \leq n$ .
%
%
Given a finite alphabet $\Sigma$, the set of \emph{all} words over
$\Sigma$ is denoted as $\Sigma^*$.
A \emph{langugage} $L$ is a subset of $\Sigma^*$.
For any $k\in \mathbb{N}$, $\Sigma^k$ is defined as: $\Sigma^k=\{x\mid
x \in \Sigma^* \text{ and } \lvert x\rvert=k\}$, where
$\Sigma^0=\{\epsilon\}$.

A finite automaton (FA) is a tuple $\mathcal{A} = ({Q, \Sigma, \delta, q_0, F})$
where $\Sigma$ is a finite alphabet;
$Q$ is a finite non-empty set of \emph{states} including the
\emph{initial state} \(q_0\) and a set \(F\) of \emph{accepting
states};\footnote{While the approach we consider can be applied to
    finite classifiers which return a result $r \in R$ from a set, the
    explanations we consider either aim to guarantee the same result
    $r$ or lead to \emph{any} different result $r' \in R \setminus
    \{r\}$, thus treating all results $R \setminus \{r\}$
equivalently, so only two output results are ever required. Hence we
restrict to classic automata.} and $\delta \subseteq Q \times \Sigma
\times Q$ is a set of \emph{transitions}.
If there is $(q,c,q') \in \delta$ such that $q,q'\in Q$ and $c\in
\Sigma$, then we say that there is a \emph{transition} from state $q$
to state $q'$ on symbol $c$ written as $q \xrightarrow{c} q'$.
A \emph{computation} for string $w$ of length $n\triangleq\lvert
w\rvert$ in an automaton $\mathcal{A}$ is a sequence of transitions $q_0
\xrightarrow{w[1]} q_1 \xrightarrow{w[2]} q_2 \xrightarrow{w[3]}
\cdots \xrightarrow{w[n]} q_n$ where $(q_{i-1},w[i],q_{i}) \in \delta$
and $1\leq i \leq n$.
An \emph{accepting computation} for $w$ in $\mathcal{A}$ from state $q_0$ is a
computation for $w$ in the automaton $\mathcal{A}$ where $q_n \in F$.
The \emph{regular language} of automaton $\mathcal{A}$ (often said to be
\emph{recognised} by $\mathcal{A}$) from state $q \in Q$, $L(q,\mathcal{A})$, is the set
of strings $w$, which have an accepting computation from state~$q$.
The \emph{language} of automaton $\mathcal{A}$ is defined as $L(\mathcal{A}) = L(q_0,\mathcal{A})$.
We define the size of automaton $\mathcal{A}$ denoted $|\mathcal{A}|$ as its number $|Q|$ of states.

A finite automaton is \textit{deterministic} (DFA) if for each pair
$(q, c)\in Q\times\Sigma$ there is at most one state $q' \in Q$ such
that $(q, c, q')\in \delta$.
Finite automata not satisfying the above condition are called
\textit{non-deterministic} (NFA).
For any NFA, there exists a DFA recognising the same regular
language~\cite{PERRIN19901}.
Due to this fact, this work focuses on DFAs.

Given a DFA $\mathcal{A}$ over the alphabet $\Sigma$, its \textit{complement} is
another FA, denoted as $\overline{\mathcal{A}}$, over the same alphabet
$\Sigma$, and it recognises the language
$L(\overline{\mathcal{A}}) = \overline{L(\mathcal{A})}$, and
$\overline{L(\mathcal{A})} = \Sigma^* \setminus L(\mathcal{A})$.
It is well understood how to construct a DFA for $\overline{\mathcal{A}}$
from $\mathcal{A}$ with at most one more state.
Clearly,  for any $\mathcal{A}$, $L(\mathcal{A})\cap
L(\overline{\mathcal{A}}) = \emptyset$, and
$L(\mathcal{A})\cup L(\overline{\mathcal{A}})=\Sigma^*$.
As a consequence, for any word
$w\in \Sigma^*$, $w\in L(\mathcal{A}) \iff w\not\in L(\overline{\mathcal{A}})$.

We make use of standard regular expression notation.
We denote by $\emptyset$ the empty language, by $c \in \any$ the
regular expression defining the language $\{c\}$, and by $\any$ the
language $\{\Sigma\}$, viewed as the set of all strings of length 1.
In general, given a regular expression $R$, the language it defines is
denoted by $L(R)$.
The concatenation of two regular expressions $R_1 R_2$ denotes the
regular language $\{ w_1 w_2 \mid w_1 \in L(R_1), w_2 \in L(R_2)\}$.
The union of two regular expressions $R_1 | R_2$ denotes the language $L(R_1) \cup
L(R_2)$.
The Kleene star $R^*$ expression is the language defined as $\{ w_1
\cdots w_n \mid n \geq 0, w_i \in L(R) \}$.
Given a \emph{range} $l..u$ to denote the set of non-negative integers $\{i \in
\mathbb{N} \cup \{0\} \mid l \leq i \leq u\}$,
we define extended regular expression notation
$\Sigma_l^u$ defining all strings of length at least $l$ and at most $u$,
defined as $\Sigma_l^u \equiv \overbrace{\Sigma \cdots \Sigma}^{l} \overbrace{(\Sigma|\emptyset) \cdots
(\Sigma|\emptyset)}^{u-l}$, $u < \infty$
and
$\Sigma_l^u \equiv \overbrace{\Sigma \cdots \Sigma}^{l} \Sigma^*$, $u = \infty$.
Observe that $\Sigma^*= \Sigma_0^\infty$ and
$\Sigma^+ = \Sigma_1^\infty = \Sigma^* \setminus \Sigma^0$.

We denote by $\mathcal{A}$ a Deterministic Finite Automaton (DFA)
and by $R$ a regular expression.
Their recognised languages, or sets of words, are written as
$L(\mathcal{A})$ and $L(R)$, respectively.

\subsection{Classification Problems and Formal Explanations}

Following~\cite{delivering_trust,darwiche2023logic}, we assume
classification problems to be defined on a set $\mathcal{F}$ of $m$
features and a set $\mathcal{K}$ of $k$ classes.
Each feature $i\in\mathcal{F}$ is in some domain $\mathbb{D}_i$ while
the feature space is $\mathbb{F}=\prod_{i=1}^{m} \mathbb{D}_i$. A
classifier is assumed to compute a total function $\kappa:
\mathbb{F}\rightarrow \mathcal{K}$.

Next, we assume an instance $\mathbf{v}\in\mathbb{F}$ such that
$\kappa(\mathbf{v})=~c\in~\mathcal{K}$.
We are interested in explaining why $\mathbf{v}$ is classified as
class $c$.
Given an instance $\mathbf{v}\in\mathbb{F}$ such that
$\kappa(\mathbf{v})=c\in\mathcal{K}$, an \emph{abductive explanation}
(AXp) $\mathcal{X}\subseteq \mathcal{F}$ is a minimal subset of
features \emph{sufficient} for the prediction.
Formally, $\mathcal{X}$ is defined as:
\begin{equation}
    \label{eq:axp}
    \forall (\textbf{x}\in \mathbb{F}). \left[\bigwedge\nolimits_{i
    \in \mathcal{X}} (x_i=v_i)\right]\rightarrow
    \left(\kappa(\mathbf{x})=c\right)
\end{equation}

Similarly, one can define another kind of explanation.
Namely, given an instance $\mathbf{v}\in\mathbb{F}$ such that
$\kappa(\mathbf{v})=c$, a \emph{contrastive explanation} (CXp) is a
minimal subset of features $\mathcal{Y}\subseteq\mathcal{F}$
that, if allowed to change, enables the prediction's alteration.
Formally, a contrastive explanation $\mathcal{Y}$ is defined as follows:
\begin{equation}
    \label{eq:cxp}
    \exists (\textbf{x}\in \mathbb{F}). \left[\bigwedge\nolimits_{j
    \not\in \mathcal{Y}} (x_j=v_j)\right] \land
    \left(\kappa(\mathbf{x}) \neq c\right)
\end{equation}
Observe that abductive explanations are used to explain \emph{why} a
prediction is made by the classifier $\kappa$ for a given instance while
contrastive explanations can be seen to answer \emph{why not} another
prediction is made by $\kappa$.
Alternatively, CXps can be seen as answering \emph{how} the
predication can be changed.

Importantly, abductive and contrastive explanations are known to enjoy
a minimal hitting set duality relationship~\cite{axpcxp}.
Given $\kappa(\mathbf{v})=~c$, let $\mathbb{A}_\mathbf{v}$ be the
complete set of AXps and $\mathbb{C}_\mathbf{v}$ be the complete set
of CXps for this prediction.
Then each AXp $\mathcal{X}\in\mathbb{A}_\mathbf{v}$ is a minimal
hitting set of $\mathbb{C}_\mathbf{v}$ and, vice versa, each CXp
$\mathcal{Y}\in\mathbb{C}_\mathbf{v}$ is a minimal hitting set of
$\mathbb{A}_\mathbf{v}$.\footnote{Given a collection of sets
$\mathbb{S}$, a \emph{hitting set} of $\mathbb{S}$ is a set $H$ such
that for each $S \in \mathbb{S}$, $H \cap S \neq \emptyset$. A
hitting set is \emph{minimal} if no proper subset of it is a
hitting set.}
This fact is the basis for the algorithms used for formal explanation
\emph{enumeration}~\cite{delivering_trust,ffa}.

\noffa{
A challenge arising in formal XAI
is given there are many AXps or CXps,
which is the \emph{best} explanation to present to a user.
An obvious answer is to use one of minimal size,
but an alternate answer is to
generate a result that records the
influence of all explanations.
A \emph{formal feature attribution} (FFA)
weighs the importance of each
feature \cite{ffa,yu-sat24} in determining the result that $\kappa(\mathbf{v}) = c$.
We define the formal feature attribution of feature $i$ as
\begin{equation}
    \label{eq:ffa_definition}
    \ffa_{\mathbf{v}}(i) = \frac{ |\{ {\cal X} ~|~ {\cal X} \in \mathbb{A}_{\mathbf{v}}, i \in {\cal X} \}|
}{|\mathbb{A}_{\mathbf{v}}|}
\end{equation}
The proportion of all AXps shows the (non-zero)
contribution of each feature to the decision.
A \emph{formal feature attribution} for $\kappa(\mathbf{v}) = c$ is the set
$\{ (i, \ffa_{\mathbf{v}}(i)) ~|~ i \in
{\cal F}, \ffa_{\mathbf{v}}(i)) > 0 \}$.
}
Critically, formal feature attribution gives an \emph{unbiased} measure of
the influence of input features on the decision.

\section{Explainable Finite Automata}\label{sec:explain}

In this work, we aim to explain the behaviour of a finite automaton
$\mathcal{A}$ on an input $w\in\Sigma^*$, which can be viewed as a
classifier mapping input $w$ to a class in ${\cal K} = \{
\mbox{accept}, \mbox{reject} \}$.
Similar to classification problems, we propose two types of
explanations for FA: abductive explanations (AXps) and contrastive
explanations (CXps).
Informally, an AXp answers ``why does $\mathcal{A}$ accept/reject
$w$?''
a CXp answers ``How can $w$ be modified to alter the response
of~$\mathcal{A}$?''.

We first define explanations abstractly, then consider concrete
instantiations.
An explanation is a regular language $L$ from a set ${\cal L}(w,\mathcal{A})$ of
candidate explanations, defined over the alphabet $\Sigma$ and
depending on both $w$ and $\mathcal{A}$. We refer to ${\cal L}(w,\mathcal{A})$ as the
\emph{language of explanations}.\footnote{We restrict ourselves to
    explanations for $w \in L(\mathcal{A})$. Observe that we can explain $w
\not\in L(\mathcal{A})$ by explaining $w \in L(\overline{\mathcal{A}})$.}

\begin{definition}[Abductive Explanation - AXp]
    A \emph{weak AXp} for $w\in~L(\mathcal{A})$ is a language $\mathcal{X}
    \in \mathcal{L}(w, \mathcal{A})$ such that $w \in \mathcal{X}$ and
    $\mathcal{X} \subseteq L(\mathcal{A})$.
    A (minimal) \emph{AXp} for $w\in L(\mathcal{A})$ is a language $\mathcal{X} \in {\cal
    L}(w,\mathcal{A})$ where there is no weak AXp $\mathcal{X}' \in
    {\cal L}(w,\mathcal{A})$ of $w \in L(\mathcal{A})$ such that
    $\mathcal{X}' \supset \mathcal{X}$.
\end{definition}
AXps capture maximality under set inclusion over
languages in $\mathcal{L}(w, \mathcal{A})$.
This property allows an AXp to define the
\emph{most general} explanation for why $w \in L(\mathcal{A})$.

\begin{definition}[Contrastive Explanation - CXp]
    A \emph{weak CXp} for $w\in~L(\mathcal{A})$ is a language $\mathcal{Y}
    \in \mathcal{L}(w, \mathcal{A})$ such that $w \in \mathcal{Y}$ and
    $\mathcal{Y} \not\subseteq L(\mathcal{A})$.
    A (minimal) \emph{CXp} for $w\in L(\mathcal{A})$ is a language $\mathcal{Y} \in {\cal
    L}(w,\mathcal{A})$ where there is no weak CXp $\mathcal{Y}' \in
    {\cal L}(w,\mathcal{A})$ of $w \in L(\mathcal{A})$ such that
    $\mathcal{Y}' \subset \mathcal{Y}$.
\end{definition}
In other words, a CXp is a \emph{minimal language} including $w$
and a word \emph{not accepted} by $\mathcal{A}$.

These abstract definitions hinge upon the definitions of the language
of explanations $\LLL(w,\mathcal{A})$.
Clearly, if $\LLL(w,\mathcal{A})$ includes all regular languages then $w$ itself
defines its own AXp $L(w)$, and any $w|w'$ with $w' \not\in L(\mathcal{A})$
defines a CXp $L(w|w')$.
Neither of these offers a \emph{meaningful} explanation.

Let us define a class of explanation languages $\lwa_l^u \triangleq
\LLL(w,\mathcal{A})$ where $0 \leq l \leq 1 \leq u$, defined by regular
expressions formed by replacing some characters in $w$ by
$\any_{l}^{u}$, i.e.\ replacing a character in $w$ by \emph{any}
string of characters of length between $l$ and $u$ (inclusive).
Note that since $l \leq 1 \leq u$, for all $\mathcal{Z} \in \lwa_l^u$
it holds that $w \in \mathcal{Z}$. We focus on the case where $l = u =
1$, i.e. when a character in $w$ may be replaced by \emph{any} single
character.

The motivation for considering $\lwa_1^1$ is that the \emph{retained}
characters in $w$, i.e.\ those not replaced by $\any$, give us an
understanding of what is essential in $w$ for its acceptance, while
the \emph{modified} parts of the word are \emph{irrelevant} to the
automaton's decision.
Note that in $\lwa_1^1$, all strings are of length equal to $\lvert
w\rvert$,
and their deterministic automaton can be built trivially
with $\lvert w\rvert +1$ states.

\begin{figure}[t]
  \centering
    \begin{tikzpicture}[shorten >=1pt, node distance=1cm, on grid, auto, initial text=, thick]
    \tikzstyle{every state}=[fill={rgb:black,1;white,10}, minimum size=18pt, inner sep=1pt]
    \node[state, initial]           (q0)                                  {0};
    \node[state]                    (q1)  [below=of q0]                   {1};
    \node[state]                    (q2)  [right=1.4cm of q0, yshift=1cm] {2};
    \node[state, accepting, double] (q3)  [right=1.4cm of q1]             {3};
    \node[state]                    (q4)  [right=1.4cm of q2]             {4};
    \node[state]                    (q5)  [below=of q4]                   {5};
    \node[state]                    (q6)  [right=1.4cm of q3]             {6};
    \node[state]                    (q7)  [right=1.4cm of q4]             {7};
    \node[state, accepting, double] (q8)  [right=1.4cm of q5]             {8};
    \node[state, accepting, double] (q9)  [right=1.4cm of q6]             {9};
    \node[state, accepting, double] (q10) [right=1.4cm of q7]             {10};

    \path[->]
        (q0) edge             node {\texttt{a}}     (q2)
             edge             node {\texttt{b}}     (q1)
        (q1) edge             node {\texttt{c}}     (q3)
        (q2) edge             node {\texttt{b}}     (q4)
        (q3) edge[loop above] node {\texttt{c}}     (q3)
             edge             node {\texttt{d}}     (q6)
        (q4) edge             node {\texttt{c}}     (q7)
             edge[swap]       node {\texttt{[d-z]}} (q5)
        (q5) edge             node {\texttt{e}}     (q8)
        (q6) edge             node {\texttt{a}}     (q9)
        (q7) edge             node {\texttt{d}}     (q10)
             edge             node {\texttt{e}}     (q8)
        (q8) edge[loop right] node {\texttt{e}}     (q8)
        (q10) edge[loop right] node {\texttt{d}}    (q10);

\end{tikzpicture}
    \caption{
        DFA recognising the language~\texttt{(abcd+)|(ab[c-z]e+)|(bc+da)|(bc+)}.
    }
  \label{fig:dpi_axps}
\end{figure}

\begin{example}[AXp and CXp] \label{example_dot}
    Consider the automaton $\mathcal{A}$ in Figure~\ref{fig:dpi_axps} accepting
    the language \texttt{(abcd+)|(ab[c-z]e+)|(bc+da)|(bc+)} over~the
    al\-phabet $\Sigma=\{\texttt{a},\texttt{b},\dots,
    \texttt{z}\}$,~\footnote{Similar \emph{union} languages are
    often an object of study in Deep Packet
    Inspection~\cite{Dpi_survey}.} and the word $w=\texttt{accc}$,
    $w\not\in L(\mathcal{A})$.
    Using $\lwa_1^1$, a trivial \emph{weak} AXp is $L(\texttt{accc})$,
    retaining all symbols.
    More informative explanations are minimal AXps, such as
    $L(\texttt{ac}\any\any)$ and $L(\texttt{a}\any\any\texttt{c})$,
    each revealing a different reason why $w\not\in L(\mathcal{A})$.
    These AXps represent subset-minimal sets of symbols of $w$ that
    are sufficient to explain the rejection of \textit{w}.
    A minimal CXp is $L(\any \texttt{ccc})
    \not\subset~L(\overline{\mathcal{A}})$, e.g. a possible correction of
    rejection is $w'=\texttt{\underline{b}ccc}\in L(\mathcal{A})$.
    Another minimal CXp is $L(a\any\texttt{c} \any)$ since
    $w''=\texttt{a\underline{b}c\underline{d}}\in L(\mathcal{A})$. \qed
\end{example}

Similarly to FXAI~\cite{delivering_trust}, we define a shorthand
notation for explanations in $\lwa_l^u$ using subsets of positions in
$w \in L(\mathcal{A})$:
\begin{align*}
    \axp_l^u(S, w)= L(s_1 \cdots s_n) \mid s_i = w[i] \text{ if } i\in
    S \text{ else } \any_{l}^{u} \\
    \cxp_l^u(S, w)= L(s_1 \cdots s_n) \mid s_i = w[i] \text{ if }
    i\not\in S \text{ else } \any_{l}^{u}
\end{align*}
Note that $\any_{1}^{1}$, $\any_{1}^{\infty}$, and $\any_{0}^{\infty}$
correspond to $\any$, $\any^+$, and $\any^*$,
allowing replacements at position $i$
by any symbol, any non-empty string,
or any string (including empty), respectively.
%

\begin{example} \label{ex:indices}
    Consider the setup of Example~\ref{example_dot}.
    Given the word $w=\texttt{accc}$,
    $L(\texttt{ac}\any\any)=\axp_1^1(\{1,2\}, w)$ and
    $L(\texttt{a}\any\any\texttt{c})=\axp_1^1(\{1,4\}, w)$.
    Similarly, $L(\any\texttt{ccc})=\cxp_1^1(\{1\}, w)$ and
    $L(\texttt{a}\any\texttt{c}\any)=\cxp_1^1(\{2,4\}, w)$.
    \qed
\end{example}

This notation shows that AXps are minimal subsets of
retained character positions, and CXps are minimal subsets of freed
character positions.
This makes it easy to see the hitting set duality between
AXps and CXps for FA.

\begin{proposition}
    \label{thm:duality}
    (Proofs for this and all other propositions are provided in the Appendix)
    Given $w\in L(\mathcal{A})$,
    assume that the sets of all AXps and CXps are
    denoted as $\mathbb{A}=\{X\subseteq\{1,\ldots,|w|\} \mid 
    \axp_l^u(X, w)\in\mathbb{W}\}$ and
    $\mathbb{C}=\{Y\subseteq\{1,\ldots,|w|\} \mid \cxp_l^u(Y,
    w)\in\mathbb{W}\}$, respectively.
    It holds that each $X\in\mathbb{A}$ is a
    minimal hitting set of $\mathbb{C}$ and, vice versa, each
    $Y\in\mathbb{C}$ is minimal hitting set of
    $\mathbb{A}$.
\end{proposition}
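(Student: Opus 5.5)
The plan is to reduce the statement to the standard hitting set duality of \cite{axpcxp}. First I would recast explanations in $\lwa_l^u$ as a monotone predicate on subsets of positions. Let $n=|w|$ and, for $S\subseteq\{1,\ldots,n\}$, let $\mathrm{Suf}(S)$ hold iff $\axp_l^u(S,w)\subseteq L(\mathcal{A})$.

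Observe that $\cxp_l^u(Y,w)=\axp_l^u(\{1,\ldots,n\}\setminus Y,w)$. Hence $Y$ gives a weak CXp iff $\neg\mathrm{Suf}(\overline{Y})$, where $\overline{Y}$ is the complement of $Y$.

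\textbf{Monotonicity.} I would show that $S\subseteq S'$ implies $\axp_l^u(S',w)\subseteq\axp_l^u(S,w)$. Every position retained in $S'$ holds a fixed character $w[i]$. Since $l\le 1\le u$, we have $w[i]\in L(\any_l^u)$, so a word matching the pattern for $S'$ also matches the pattern for $S$. Concatenation is monotone, which gives the inclusion.

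Consequently $\mathrm{Suf}$ is upward closed in $S$. Maximality of an AXp language under inclusion then corresponds to subset-minimality of $X$ with $\mathrm{Suf}(X)$. Minimality of a CXp language corresponds to subset-minimality of $Y$ with $\neg\mathrm{Suf}(\overline{Y})$. Both directions use the same monotone map from position sets to languages.

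This correspondence is the step I expect to be the main obstacle. The map $S\mapsto\axp_l^u(S,w)$ need not be injective in general: for instance, if $w$ has adjacent equal characters and $l=0$, two different sets might yield the same language. So ``no larger weak AXp language'' must be read carefully. I would handle this by taking the position-set reading stated in the proposition, with $\mathbb{A}$ and $\mathbb{C}$ as sets of index sets and minimality over indices, as the paper's notation suggests. Then only monotonicity is needed, not injectivity.

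\textbf{Duality argument.} With this in place, the classic argument applies.
\begin{enumerate}
\item \emph{Every AXp meets every CXp.} If $X\in\mathbb{A}$ and $Y\in\mathbb{C}$ with $X\cap Y=\emptyset$, then $X\subseteq\overline{Y}$. Monotonicity gives $\mathrm{Suf}(\overline{Y})$, contradicting that $Y$ is a weak CXp.
\item \emph{Every hitting set $H$ of $\mathbb{C}$ is sufficient.} If not, $\overline{H}$ contains some minimal $Y$ with $\neg\mathrm{Suf}(\overline{Y})$. Such a $Y$ exists by shrinking within the finite set $\overline{H}$, using upward closure again. This $Y\in\mathbb{C}$ is disjoint from $H$, a contradiction.
\item \emph{AXps are exactly the minimal hitting sets of $\mathbb{C}$.} By step 1 every AXp is a hitting set of $\mathbb{C}$. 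By step 2 any proper subset that is still a hitting set would be sufficient, contradicting the minimality of the AXp.
\end{enumerate}
The symmetric statement, that CXps are the minimal hitting sets of $\mathbb{A}$, follows by the dual argument, swapping $\mathrm{Suf}$ with $\neg\mathrm{Suf}\circ\overline{\cdot}$. Alternatively, it follows from the general fact that minimal hitting set duality of antichains is involutive.
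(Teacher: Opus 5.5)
Your argument is correct, but it takes a different route from the paper. The paper's proof is a one-line reduction. It treats the positions of $w$ as features and asserts that $\axp_l^u(S,w)$ (resp.\ $\cxp_l^u(S,w)$) is an AXp (resp.\ CXp) exactly when $S$ is an AXp (resp.\ CXp) of the classification problem ``$\mathcal{A}$ accepts $w$''. It then imports the general duality of \cite{axpcxp}.

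You instead reprove that duality directly. You isolate the one property it needs: $S\mapsto\axp_l^u(S,w)$ is order-reversing because $l\le 1\le u$. You then run the standard transversal argument. Your steps 1--3 and the symmetric step are sound; in step 2, finiteness alone already yields the minimal $Y$.

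The paper's route is shorter and reuses a known theorem, but it leaves two things implicit. First, for $(l,u)\neq(1,1)$ feature $i$ must range over $L(\any_l^u)$ rather than $\Sigma$; this is harmless, since only $w[i]\in L(\any_l^u)$ is used. Second, it silently identifies inclusion-maximal/minimal \emph{languages} with subset-minimal \emph{index sets}. Your approach exposes that identification.

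The issue is sharper than non-injectivity: for $l=0$ the map can fail to reflect inclusion, and then the language reading makes the proposition false. Take $L(\mathcal{A})=\Sigma^*\texttt{a}\Sigma^*$, $w=\texttt{aaa}$ and $\lwa_0^\infty$. Then $\axp_0^\infty(\{1\},w)=L(\texttt{a}\Sigma^*)\subsetneq L(\Sigma^*\texttt{a}\Sigma^*)=\axp_0^\infty(\{2\},w)$. So under language maximality only $\{2\}$ is an AXp, while the unique CXp $\{1,2,3\}$ is not a minimal hitting set of $\{\{2\}\}$. For $l=1$ and $|\Sigma|\ge 2$ the two readings do coincide, as you can see by comparing words of length $|w|$.

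So you should drop your tentative first claim that language maximality ``corresponds'' to subset-minimality. Adopting the index-set reading is not merely convenient but necessary for $\lwa_0^\infty$, and the paper's proof tacitly relies on it too.
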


Various explanation languages can be related as follows:

\begin{proposition}\label{rel-axp}
    Let $w\in L(\mathcal{A})$ and $l_1..u_1 \supseteq l_2..u_2$.
    If $S$ is such that
    $\axp_{l_1}^{u_1}(S,w)$ is a weak AXp for $w \in L(\mathcal{A})$ using
    $\lwa_{l_1}^{u_1}$
    then $\axp_{l_2}^{u_2}(S, w)$ is a weak AXp using $\lwa_{l_2}^{u_2}$.
\end{proposition}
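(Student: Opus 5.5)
The plan is a language-containment argument: if the range $l_2..u_2$ is contained in $l_1..u_1$, the explanation built with the narrower range should denote a subset of the one built with the wider range. Since being a weak AXp means containing $w$ and being contained in $L(\mathcal{A})$, the property then passes down to the subset.

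The first step is the pointwise inclusion $L(\any_{l_2}^{u_2}) \subseteq L(\any_{l_1}^{u_1})$. By definition, $L(\any_l^u)$ is exactly the set of strings over $\Sigma$ whose length lies in $l..u$, with $u=\infty$ allowed. This holds both for the finite-$u$ form $\Sigma^l(\Sigma|\emptyset)^{u-l}$ and for the infinite form $\Sigma^l\Sigma^*$. Hence $l_1..u_1 \supseteq l_2..u_2$ gives the inclusion directly. I will read the optional factor $(\Sigma|\emptyset)$ as allowing zero or one character, as the paper intends.

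The second step lifts this to whole expressions. By definition,
\[
\axp_{l_j}^{u_j}(S,w) = L(s^{(j)}_1\cdots s^{(j)}_n),
\]
where $s^{(j)}_i = w[i]$ for $i\in S$ and $s^{(j)}_i = \any_{l_j}^{u_j}$ otherwise. Concatenation is monotone in each factor: if $L(R_i)\subseteq L(R_i')$ for every $i$, then $L(R_1\cdots R_n)\subseteq L(R_1'\cdots R_n')$. This is immediate from the definition of concatenation as a set of products of words. Applying it factor by factor gives $\axp_{l_2}^{u_2}(S,w)\subseteq \axp_{l_1}^{u_1}(S,w)$.

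The third step concludes. The inclusion just shown and the hypothesis give $\axp_{l_2}^{u_2}(S,w)\subseteq \axp_{l_1}^{u_1}(S,w)\subseteq L(\mathcal{A})$. Moreover $w\in\axp_{l_2}^{u_2}(S,w)$, because $l_2\le 1\le u_2$ lets each freed position take the single character $w[i]$. The language also lies in $\lwa_{l_2}^{u_2}$ by construction, so it is a weak AXp using $\lwa_{l_2}^{u_2}$.

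I do not expect a substantive obstacle. The only delicate point is checking that the extended notation $\any_l^u$ denotes exactly the length-range language, particularly how $(\Sigma|\emptyset)$ is interpreted and the $u=\infty$ case. Once that is fixed, the rest is routine monotonicity.
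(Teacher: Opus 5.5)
Your proposal is correct and follows the paper's own proof. Both arguments obtain $L(\any_{l_2}^{u_2})\subseteq L(\any_{l_1}^{u_1})$ from the range inclusion, lift it position-wise to $\axp_{l_2}^{u_2}(S,w)\subseteq\axp_{l_1}^{u_1}(S,w)\subseteq L(\mathcal{A})$, and conclude; you are slightly more careful than the paper in explicitly checking $w\in\axp_{l_2}^{u_2}(S,w)$ via $l_2\le 1\le u_2$ and in reading $(\Sigma|\emptyset)$ as the intended optional single character.
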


\begin{example} \label{ex:allaxps}
    Consider language $L(R)$ defined by the regular expression
    $R=\texttt{(abcd+)|}$ $\texttt{(ab[c-z]e+)|(bc+da)|(bc+)}$ 
    depicted in in Figure~\ref{fig:dpi_axps} and
    $w=\texttt{ceccd}$.
    Observe that \mbox{$w\not\in L(R)$}, i.e. $w\in \overline{L(R)}$.
    Using $\lwa_1^1$, an AXp is $\axp_1^1(\{2\}, w) = L(\any
    \texttt{e} \any \any \any)$ since $L(\any \texttt{e} \any \any
    \any) \subseteq \overline{L(R)}$.
    However, a more general language $L(\any^+ \texttt{e} \any^+
    \any^+ \any^+)$ (using $\lwa_1^\infty$) is not an AXp as it
    contains a word $\texttt{abeeee} \not\in \overline{L(R)}$.
    $\axp_1^\infty(\{2,3\}, w)$ is an AXp, i.e., $L(\any^+ \texttt{ec}
    \any^+ \any^+)\subseteq \overline{L(R)}$ as it reveals
    substring \texttt{ec} to be a reason for rejection.
    \qed
\end{example}

Although a richer explanation language may
lead to larger AXps in terms of character positions,
these may sometimes offer better expressivity to a user
if that is of their concern.

\begin{proposition}\label{rel-cxp}
    Let $w\in L(\mathcal{A})$, and $l_1..u_1\subseteq l_2..u_2$.
    If $S$ is such that
    $\cxp_{l_1}^{u_1}(S,w)$ is a weak CXp for $w \in L(\mathcal{A})$ using
    $\lwa_{l_1}^{u_1}$
    then $\cxp_{l_2}^{u_2}(S, w)$ is a weak CXp using $\lwa_{l_2}^{u_2}$.
\end{proposition}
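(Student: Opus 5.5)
The plan is a monotonicity argument: enlarging the range of allowed replacement lengths can only enlarge the explanation language, and the weak CXp condition is preserved under enlargement. Recall that $\cxp_l^u(S,w) = L(s_1\cdots s_n)$, where $s_i = w[i]$ for $i\notin S$ and $s_i=\any_l^u$ for $i\in S$.

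\textbf{Step 1 (language inclusion).} First we show that $l_1..u_1\subseteq l_2..u_2$ gives $L(\any_{l_1}^{u_1})\subseteq L(\any_{l_2}^{u_2})$. This is immediate from the definition: $\any_l^u$ denotes exactly the strings whose length lies in the range $l..u$, and this also holds when $u=\infty$. Since concatenation of regular languages is monotone in each argument, replacing each factor $\any_{l_1}^{u_1}$ (for $i\in S$) by $\any_{l_2}^{u_2}$, and keeping the fixed factors $w[i]$ for $i\notin S$, yields
\[
\cxp_{l_1}^{u_1}(S,w)\subseteq \cxp_{l_2}^{u_2}(S,w).
\]
The underlying fact is that if $L_i\subseteq L_i'$ for each $i$, then $L_1\cdots L_n\subseteq L_1'\cdots L_n'$. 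Any word in the smaller language factors as $x_1\cdots x_n$ with each $x_i\in L_i\subseteq L_i'$.

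\textbf{Step 2 (weak CXp conditions).} Since $l_2\le 1\le u_2$, the remark after the definition of $\lwa_l^u$ gives $w\in\cxp_{l_2}^{u_2}(S,w)$, and this language lies in $\lwa_{l_2}^{u_2}$ by construction. By hypothesis $\cxp_{l_1}^{u_1}(S,w)\not\subseteq L(\mathcal{A})$, so there is a word $w'\in\cxp_{l_1}^{u_1}(S,w)$ with $w'\notin L(\mathcal{A})$. By Step 1, $w'\in\cxp_{l_2}^{u_2}(S,w)$, hence $\cxp_{l_2}^{u_2}(S,w)\not\subseteq L(\mathcal{A})$. This is exactly the weak CXp condition.

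There is no real obstacle here. The only point needing care is Step 1 in the infinite case $u=\infty$, where one checks that $\Sigma^{l}\Sigma^*$ is precisely the set of strings of length at least $l$. That set is clearly contained in the strings of length at least $l_2$ whenever $l_2\le l_1$, so the inclusion still holds. The argument is the mirror image of Proposition~\ref{rel-axp}: there, shrinking the language preserves the subset condition $\mathcal{X}\subseteq L(\mathcal{A})$, while here enlarging it preserves the non-subset condition.
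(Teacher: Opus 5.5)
Your proof is correct and follows essentially the same route as the paper. Both take a counterexample $w'\in\cxp_{l_1}^{u_1}(S,w)\setminus L(\mathcal{A})$, note that its replacements at positions in $S$ remain admissible under the wider range $l_2..u_2$, and conclude that $w'$ witnesses $\cxp_{l_2}^{u_2}(S,w)\not\subseteq L(\mathcal{A})$; you simply state explicitly the inclusion $\cxp_{l_1}^{u_1}(S,w)\subseteq\cxp_{l_2}^{u_2}(S,w)$ and the membership $w\in\cxp_{l_2}^{u_2}(S,w)$, which the paper uses without spelling out.
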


As a corollary of Propositions~\ref{rel-axp}--\ref{rel-cxp},
for $l_1..u_1\supseteq~l_2..u_2$
and each CXp $S$ using $\lwa_{l_1}^{u_1}$ there is a subset of
$S$ which is a CXp using $\lwa_{l_2}^{u_2}$, similarly for
each AXp $S$ using $\lwa_{l_1}^{u_1}$ there is a superset of
$S$ which is an AXp using $\lwa_{l_2}^{u_2}$.

\section{Why Formal Explanations?} \label{sec:why}

Since DFAs are often considered as one of the simplest models of
computation, one might wonder why we need formal explanations for
their behavior in the first place.
Indeed, it may be assumed that DFAs are inherently interpretable as
one can easily trace the path through the states of the DFA given a
particular word and claim the path to be the reason for acceptance or
rejection (see Example~\ref{ex:bbbbb}).
Here, we argue that this is not always the case because there are
languages whose (smallest size) DFAs are exponentially larger than the
regular expressions generating these languages.
Importantly, some of these languages admit short formal explanations.

\begin{proposition} \label{prop:edfa}
  Given an alphabet $\Sigma$ and a parameter $n\in\mathbb{N}$, there
  exists a family of regular languages $L_n$ over $\Sigma$ such that
  any DFA $\mathcal{A}_n$ for $L_n$ has least $2^n$ states while $\forall
  w\in\Sigma^*$ for any $\axp_1^1(S, w)$ and any $\cxp_1^1(S, w)$ it
  holds that $|S|\leq k$, for some constant $k\in\mathbb{N}$, $k<n$.
\end{proposition}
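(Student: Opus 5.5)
The classical witness is the family $L_n = \Sigma^*\,\texttt{a}\,\Sigma^{n-1}$ over $\Sigma=\{\texttt{a},\texttt{b}\}$: the words whose $n$-th symbol from the end is \texttt{a}. It has a regular expression of size $O(n)$, but every DFA recognising it needs at least $2^n$ states. The proof therefore splits into two parts: a state lower bound, and a uniform bound on the sizes of AXps and CXps in $\lwa_1^1$ for every $w\in\Sigma^*$.

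\textbf{State lower bound.} I would use the standard fooling-set (Myhill--Nerode) argument. Take two distinct words $x\neq y\in\{\texttt{a},\texttt{b}\}^n$, and suppose they first differ at position $i$, with $x[i]=\texttt{a}$ and $y[i]=\texttt{b}$. Then the suffix $z=\texttt{b}^{i-1}$ separates them: $xz\in L_n$ and $yz\notin L_n$. Hence the $2^n$ words of length $n$ all reach pairwise distinct states in any DFA for $L_n$, which gives the bound.

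\textbf{Explanation sizes, accepted case.} For $w\in L_n$ of length $m\ge n$, membership depends only on the single position $p=m-n+1$. Every word in $\lwa_1^1$ has length exactly $m$, so $\axp_1^1(\{p\},w)=L(\Sigma^{p-1}\texttt{a}\Sigma^{n-1})\subseteq L_n$. The empty set cannot be a weak AXp, because that language contains a word with \texttt{b} at position $p$. So the unique AXp is $\{p\}$. Freeing position $p$ lets us put \texttt{b} there, so $\{p\}$ is a weak CXp; and by duality (Proposition~\ref{thm:duality}) the unique CXp is $\{p\}$ as well.

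\textbf{Explanation sizes, rejected case.} For $w\notin L_n$ we explain $w\in L(\overline{\mathcal{A}_n})$. If $|w|\ge n$, the same argument applies with the roles of \texttt{a} and \texttt{b} swapped, and both explanations are $\{p\}$. If $|w|<n$, every word of length $|w|$ is rejected. Hence the empty set is an AXp, so $\mathbb{A}=\{\emptyset\}$, and no weak CXp exists, so $\mathbb{C}=\emptyset$.

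Across all cases every explanation has $|S|\le 1$, so the statement holds with $k=1<n$ for $n\ge 2$.

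\textbf{Main obstacle.} The delicate step is the boundary case $|w|<n$. There the set of CXps is empty, so the duality statement of Proposition~\ref{thm:duality} holds only in the degenerate sense: the only minimal hitting set of the empty collection is $\emptyset$, which agrees with $\mathbb{A}=\{\emptyset\}$. The claim of the proposition is unaffected, since it only bounds the sizes of sets $S$ that actually occur, and the bound holds vacuously for the CXps here.

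The key structural fact behind the whole argument is that $\lwa_1^1$ preserves word length. This is why the position $p$ is fixed for all words in each candidate explanation.
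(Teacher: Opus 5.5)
Your proof is correct and follows the paper's own argument: the same witness family $\Sigma^*\texttt{a}\Sigma^{n-1}$, in which the single position $|w|-n+1$ is both the unique AXp and the unique CXp. You are more thorough than the paper, which only cites the $2^n$ state lower bound, ignores the case $|w|<n$ (where that position does not exist and the only AXp is empty), and does not note that $n\ge 2$ is needed to get $k=1<n$.
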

One might assume that a \emph{computation} 
(the sequence of states visited) can reveal
the explanation for the acceptance/rejection of a
word. However, a \emph{computation} provides a causal chain 
without isolating the necessary symbols for an
abductive explanation.
To illustrate the deficiency of \emph{computation}
inspection, we revisit the automaton $\mathcal{A}$
from Example~\ref{ex:bbbbb}.
Figure~\ref{fig:bbbbb_trace} shows the DFA for
$\mathcal{A}$ while processing the word $w = bbbbb$.
The computation
requires the user to follow the path 
$0 \xrightarrow{b} 2 \xrightarrow{b} 5 \xrightarrow{b} 5 \xrightarrow{b} 5 \xrightarrow{b} 5$,
one might incorrectly identify the first two symbols ``b'' as the reason
for reaching the acceptance state $5$.
In contrast, our approach also identifies a more succinct 
explanation with smaller cardinality:
the symbol at position 3, which is an AXp, 
$\axp_1^1(\{3\}, w)=L(\any\any\texttt{b}\any\any)$. 

A plausible alternative to formal explanations for DFAs builds on
another well-studied problem in the context of languages, namely, the
computation of a \emph{minimum edit distance}~\cite{wagner-cacm74}.
Given a language $L$ and a word $w\not\in L$, the minimum edit
distance from $w$ to $L$ is the smallest number of symbols to replace
in $w$ in order to obtain a word $w'$ such that $w'\in L$.
While minimum edit distance can seemingly be related with smallest
size CXps, it is often not informative enough to explain the behavior
of a DFA, thus warranting the needs for AXps:

\begin{proposition} \label{prop:ed}
  Given an alphabet $\Sigma$ and a parameter $n\in\mathbb{N}$, there
  exists a family of regular languages $L_n$ over $\Sigma$ and a word
  $w\not\in L_n$ with the minimum edit distance of at least $n$ such
  that for any AXp $\axp_1^1(S,w)$ it holds that $|S|\leq k$, for some
  constant $k\in\mathbb{N}$, $k<n$.
\end{proposition}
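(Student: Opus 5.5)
We will give an explicit construction. We take $\Sigma=\{\texttt{a},\texttt{b}\}$ and let $L_n = L(\texttt{a}^* \texttt{b} \texttt{a}^*)$, the words containing exactly one \texttt{b}. We take $w = \texttt{b}^{n+1}$, so $w\not\in L_n$ and we explain $w\in\overline{L_n}$. Note that $L_n$ could even be independent of $n$; only the word depends on $n$. If a family genuinely indexed by $n$ is preferred, the same idea works with $L_n = (\Sigma^{n+1})^*\cup L(\texttt{a}^*\texttt{b}\texttt{a}^*)$ restricted suitably, but the simplest choice suffices because the statement only asks for a family.

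The first step is to establish the edit distance. Any $w'\in L_n$ of length $n+1$ has exactly one \texttt{b}. Since edits here are substitutions, turning $w$ into $w'$ requires changing $n$ positions from \texttt{b} to \texttt{a}. Hence the minimum edit distance is exactly $n$.

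The second step is to bound the AXps, with $k=2$. For $S=\{1,2\}$, every word in $\axp_1^1(\{1,2\},w)=L(\texttt{bb}\Sigma^{n-1})$ contains at least two \texttt{b}'s, so it lies in $\overline{L_n}$; this makes it a weak AXp. Any pair $\{i,j\}$ of retained positions works in the same way. A single retained position does not suffice, since filling every other position with \texttt{a} yields a word in $L_n$. The empty set does not suffice either, since it admits words such as $\texttt{b}\texttt{a}^n$. Therefore every minimal AXp consists of exactly two positions, so $|S|=2\le k<n$ for $n\ge3$.

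The main subtlety is that minimality must be verified over all AXps, not just exhibited for one. So we must argue that every weak AXp $S$ with $|S|\ge 3$ strictly contains a two-element weak AXp. This holds because, by the argument above, any subset of size two is already a weak AXp. This argument also gives, via Proposition~\ref{thm:duality}, that the CXps are the sets of $n$ positions, which is consistent with the edit distance.
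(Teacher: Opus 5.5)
Your proof is correct, but it uses a different witness from the paper's.

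The paper takes $L_n=(\texttt{a}|\texttt{b})^*\texttt{a}^n(\texttt{a}|\texttt{b})^*$ and $w=\texttt{b}^{2n}$. There, a set of retained positions is a weak AXp exactly when it meets every window of $n$ consecutive positions. So the AXps are pairs with one position in each half of $w$, at distance at most $n$. Showing that no minimal hitting set of these windows has three or more elements needs a small covering argument, which the paper only asserts.

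Your language (words with exactly one \texttt{b}), with $w=\texttt{b}^{n+1}$, makes \emph{every} pair of positions an AXp. The complete description of the AXps, and dually of the CXps as the $n$-element subsets, is therefore immediate and fully rigorous. It also exhibits the phenomenon with a single fixed language. The statement permits this, and it is a slightly stronger conclusion. The paper's $L_n$, by contrast, genuinely depends on $n$ and reads more naturally as a pattern-matching rule.

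Your edit-distance bound also holds beyond what is needed. It is valid for the substitution-only distance the paper uses, and for Levenshtein distance as well, since each edit operation changes the number of \texttt{b}'s by at most one. Both constructions give $k=2$ and so need $n\ge 3$, which you correctly note.

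One thing to remove: the aside $L_n=(\Sigma^{n+1})^*\cup L(\texttt{a}^*\texttt{b}\texttt{a}^*)$ is wrong as written. Since $w=\texttt{b}^{n+1}\in(\Sigma^{n+1})^*$, it would place $w$ in $L_n$. The constant family already suffices, so simply drop that remark.
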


As a result, understanding the behavior of a DFA either by directly
inspecting it or by relying on the minimum edit distance may be quite
misleading in practice.

The apparatus of formal explainability discussed hereinafter offers a
potent alternative enabling a user to opt for succinct abductive or
contrastive explanations, or (importantly) both.
Crucially and following~\cite{delivering_trust}, besides minimality,
formal abductive (resp., contrastive) explanations for DFAs guarantee
logical correctness as they can be seen to satisfy the property
\eqref{eq:axp} (resp., \eqref{eq:cxp}).
The following sections describe the algorithms for computing formal
explanations and demonstrate their practical scalability.

\section{Computing One Formal Explanation}\label{sec:axp}




While computing an AXp is generally hard in the context of
classification problems~\cite{delivering_trust}, this section shows it
to be straightforward for DFAs.

Given a word $w\in L(\mathcal{A})$, Algorithm~\ref{alg:extract_axp} extracts a
minimal AXp by refining a \emph{weak} AXp, initially $L(w)$.
The algorithm ``drops'' characters from $w$ replacing them one by one
with $\any_l^u$, iterating only over indices in the candidate set $X$.
If the resulting language is still included in $L(\mathcal{A})$, the character
is considered irrelevant and permanently ``dropped''; otherwise, it is
part of the resulting AXp.

The output is minimal, as ``dropping'' any remaining character would
necessarily introduce a language with at least one counterexample.

\begin{algorithm}[tb]
  \caption{\textsc{ExtractAXp} -- a Single AXp Extraction}
\label{alg:extract_axp}
\textbf{Input}: Candidate set $X$, automaton $\mathcal{A}$, word $w$, bounds $l, u$ \\
\textbf{Output}: Minimal $X$ with $\axp_l^u(X, w) \subseteq L(\mathcal{A})$

\begin{algorithmic}[1]
  \FORALL{$i \in X$} \label{alg:line:loop}
    \IF{$\axp_l^u(X \setminus \{i\}, w) \subseteq L(\mathcal{A})$}
      \STATE $X \gets X \setminus \{i\}$
    \ENDIF
  \ENDFOR
  \STATE \textbf{return} $X$
\end{algorithmic}
\end{algorithm}


\begin{proposition}\label{thm:axp}
  Given a DFA $\mathcal{A}$ with $m$ states and a word $w$ s.t. $\lvert
  w\rvert=n$, computing an AXp for $w \in L(\mathcal{A})$ in languages
  $\lwa_1^1$, $\lwa_{1}^\infty$, or $\lwa_{0}^\infty$ can be done in
  $O(\lvert\Sigma\rvert mn^2)$ time.
\end{proposition}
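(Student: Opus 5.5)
The argument has two parts: correctness of Algorithm~\ref{alg:extract_axp}, run with $X=\{1,\ldots,n\}$, and a cost of $O(|\Sigma| m n)$ for each of its $n$ inclusion checks $\axp_l^u(X\setminus\{i\},w)\subseteq L(\mathcal{A})$, which gives the $O(|\Sigma| m n^2)$ total.

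\textbf{Correctness.} Initially $\axp_l^u(\{1,\ldots,n\},w)=L(w)\subseteq L(\mathcal{A})$, because $w\in L(\mathcal{A})$. The loop only removes an index when the smaller set still gives a weak AXp, so the invariant $\axp_l^u(X,w)\subseteq L(\mathcal{A})$ is maintained. For minimality, observe that $S'\subseteq S$ implies $\axp_l^u(S,w)\subseteq \axp_l^u(S',w)$. Suppose an index $i$ is kept at the time it is tested, when the current set is $X_i\supseteq X_{\text{final}}$. Then $\axp_l^u(X_i\setminus\{i\},w)\not\subseteq L(\mathcal{A})$. By monotonicity, $\axp_l^u(X_{\text{final}}\setminus\{i\},w)\supseteq \axp_l^u(X_i\setminus\{i\},w)$, so it also contains a word outside $L(\mathcal{A})$. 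Hence no single index can be dropped from the final set. By the same monotonicity, no proper subset of the final set gives a weak AXp either, since every proper subset is contained in $X_{\text{final}}\setminus\{i\}$ for some $i$. This shows that the resulting language is maximal in $\lwa_l^u$.

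\textbf{Cost of one check.} Rather than building the product with the complement, I would decide inclusion by a forward reachability sweep over the positions of $w$, tracking the set of states of $\mathcal{A}$ reachable after reading a prefix of the pattern. Let $T_0=\{q_0\}$, and for each position $j$ compute $T_j$ from $T_{j-1}$ as follows.
\begin{itemize}
\item If $j$ is retained, apply $\delta(\cdot,w[j])$ to every state; this costs $O(m)$.
\item If $j$ is freed and $(l,u)=(1,1)$, take the image under all symbols; this costs $O(|\Sigma| m)$.
\item If $j$ is freed and $(l,u)=(1,\infty)$, take the one-step image and then close it under all transitions. The closure is a graph search over at most $m$ states and $|\Sigma| m$ edges, so it costs $O(|\Sigma| m)$.
\item If $j$ is freed and $(l,u)=(0,\infty)$, do the same but also keep $T_{j-1}$ itself, to allow the empty string; the cost is again $O(|\Sigma| m)$.
\end{itemize}
Inclusion holds if and only if $T_n\subseteq F$. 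This works because $\mathcal{A}$ is deterministic: every word of the pattern language has exactly one computation, and it ends in $T_n$; conversely, every state in $T_n$ is reached by some word of the pattern. So one check costs $O(|\Sigma| m n)$, and the $n$ checks cost $O(|\Sigma| m n^2)$. One detail needs care here. If $\delta$ is partial, a missing transition means some word is rejected, so I would first complete $\mathcal{A}$ with a sink state, adding one state, or equivalently report non-inclusion as soon as a transition is undefined.

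\textbf{Main obstacle.} The step that needs the most care is the $\Sigma^*$ and $\Sigma^+$ closures. They must be charged $O(|\Sigma| m)$ per position, not $O(m^2)$, and this holds because each closure is one BFS over the transition graph of $\mathcal{A}$, whose size is $O(|\Sigma| m)$. With that in place, the per-check bound $O(|\Sigma| m n)$ holds uniformly for all three languages, and the stated bound follows.
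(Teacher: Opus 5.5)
Your proof is correct and follows the paper's route. You run \textsc{ExtractAXp} from $\{1,\ldots,n\}$, argue the output is set-minimal, and charge $O(|\Sigma|mn)$ to each of the $n$ inclusion checks. The only real difference is how a check is carried out. The paper keeps an automaton $\mathcal{A}_w$ for the current candidate, patches it in place so it stays at $O(n)$ states, and invokes a product-based inclusion test costing $O(|\Sigma|\cdot|\mathcal{A}|\cdot|\mathcal{A}_w|)$. Your sweep over $T_0,\ldots,T_n$ is exactly the reachable part of that product, explored layer by layer using the linear shape of the pattern.

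Your version buys explicitness. It does not care whether the candidate automaton is deterministic; the paper calls $\mathcal{A}_w$ a DFA, but its one-state patch for $\Sigma^+$ or $\Sigma^*$ generally yields a nondeterministic automaton. It also handles a partial $\delta$ correctly and justifies the $O(|\Sigma|m)$ cost of each closure. Your monotonicity argument for minimality is more careful than the paper's one-line justification. The paper's version is more modular and mirrors the implementation, which delegates inclusion to an off-the-shelf library.

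One caveat, which you share with the paper: your sentence ``the resulting language is maximal in $\lwa_l^u$'' claims more than you proved. What you showed is that no $\axp_l^u(S',w)$ with $S'\subsetneq X_{\text{final}}$ is a weak AXp.
\begin{itemize}
\item For $\lwa_1^1$ and $\lwa_1^\infty$ (with $|\Sigma|\ge 2$) the two notions coincide. Restricting to words of length $n$ shows that $\axp_l^u(S,w)\subseteq\axp_l^u(S',w)$ forces $S'\subseteq S$.
\item For $\lwa_0^\infty$ they do not. Take $w=\texttt{aaa}$ and $L(\mathcal{A})=L(\Sigma^*\texttt{a}\Sigma^*)$. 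Processing $1,2,3$, the algorithm returns $\{3\}$, i.e.\ $L(\Sigma^*\texttt{a})$. This is strictly contained in the weak AXp $\axp_0^\infty(\{2\},w)=L(\Sigma^*\texttt{a}\Sigma^*)$.
\end{itemize}
The paper implicitly identifies AXps with subset-minimal sets of retained positions, as in its duality proposition. Under that reading your argument is complete. Just state the conclusion as minimality of the position set rather than maximality of the language.
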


But if the language is given as a regular expression instead of a DFA, 
extracting just one AXp may be \emph{intractable}.

\begin{proposition}
  Computing a single AXp using $\lwa_0^\infty$ for $w \in L(R)$ given
  a regular expression $R$ is PSPACE-hard.
\end{proposition}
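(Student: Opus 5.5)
Our plan is to reduce from regular expression universality, which is PSPACE-complete: given a regular expression $E$ over $\Sigma$, decide whether $L(E)=\Sigma^*$. We build, in polynomial time, a regular expression $R$ and a word $w\in L(R)$ such that the unique AXp for $w\in L(R)$ in $\lwa_0^\infty$ tells us whether $E$ is universal. Since any algorithm computing a single AXp would then decide universality, the problem is PSPACE-hard.

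\textbf{Construction.} Take a fresh symbol $\#\notin\Sigma$ and work over $\Sigma'=\Sigma\cup\{\#\}$. Let $w=\#$, a word of length $1$, and let
\[ R \;=\; \# \;\big|\; (\Sigma')^*\#(\Sigma')^* \text{-free part } E \big. \]
More concretely, we set $R = \#\,|\,E$, where $E$ is read over $\Sigma'$ and therefore contains no word with $\#$. Then $w=\#\in L(R)$. With $n=1$ there are only two candidate languages in $\lwa_0^\infty$: $\axp_0^\infty(\{1\},w)=L(\#)$ and $\axp_0^\infty(\emptyset,w)=\Sigma'^*$. The latter is included in $L(R)$ iff $L(E)\supseteq\Sigma'^*\setminus\{\#\}$. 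Because $E$ has no $\#$-words, this inclusion always fails, so the alphabet must be handled more carefully.

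\textbf{Fix.} We avoid the extra symbol. Let $w=a$ for some $a\in\Sigma$, and set $R=a\,|\,E$. Then $w\in L(R)$. The empty AXp set is a weak AXp iff $\Sigma^*\subseteq L(a|E)$, i.e.\ iff $\Sigma^*\setminus\{a\}\subseteq L(E)$. Universality of $E$ reduces to this variant: $E$ is universal iff $a\in L(E)$ (checkable in polynomial time by regex membership) and $\Sigma^*\setminus\{a\}\subseteq L(E)$. So the reduction first tests $a\in L(E)$, rejecting if it fails, and otherwise uses the AXp oracle on $(R,w)$. The oracle returns $\emptyset$ iff $E$ is universal; otherwise it returns $\{1\}$, since $L(a)\subseteq L(R)$ trivially. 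This is a polynomial-time Turing reduction, which suffices for PSPACE-hardness. A many-one version is also available: replace $R$ by $a|E$ only after conditioning on membership, or simply note that the AXp computation solves the PSPACE-complete problem ``is $\Sigma^*\setminus\{a\}\subseteq L(E)$'', whose hardness follows from universality by the same membership test.

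\textbf{Main obstacle.} The delicate step is making sure the candidate languages in $\lwa_0^\infty$ coincide exactly with the ones used above. For $|w|=1$, $\axp_0^\infty(\emptyset,w)$ is $L(\Sigma_0^\infty)=\Sigma^*$, which matches. One might instead want longer $w$ or $|\Sigma|\ge2$ to avoid degenerate cases, but universality for regular expressions is PSPACE-complete already over a binary alphabet, so this causes no loss. Finally, we note that the hardness is robust to the output format: even deciding whether a given AXp has size $0$ is hard, so producing any AXp is PSPACE-hard.
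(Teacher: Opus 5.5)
Your proof is correct and follows the paper's argument. Both reduce from regular-expression universality using the observation that $\axp_0^\infty(\emptyset,w)=\Sigma^*$ for nonempty $w$, so the empty set is the (unique) AXp exactly when $L(R)$ is universal.

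Your version also makes explicit a detail the paper leaves implicit: how to obtain a nonempty $w\in L(R)$. You do this with a single letter $a$ and a polynomial-time membership test; once $a\in L(E)$, you have $L(a|E)=L(E)$, so you could even query the oracle on $(E,a)$ directly. The abandoned $\#$ construction should simply be deleted.
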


\ignore{
The extraction of a minimal CXp is outlined in
Algorithm~\ref{alg:extract_cxp}.
The input is a candidate explanation $\mathcal{Y}$ where $\cxp(\YYY)$
is a weak CXp (includes at least one counterexample string to $L(A)$),
an FA $A$, and an accepted word $w$.
Again, here we will assume $\YYY = \{1, \ldots, |w|\}$, but in the
next section we will reuse the function in more general cases.
%
%
\textsc{ExtractCxp} builds a candidate CXp automaton $W$ from $\YYY$
by trying to replace $\any$ characters in $W$ by the corresponding
character in $w$.
It then checks whether $L(W) \subseteq L(A)$ which will only hold if
the $L(W)$ includes no counterexamples.
If $\YYY = \{1, \ldots, |w|\}$ this means that $\Sigma^n \subseteq
L(A)$ and there are \emph{no CXps} for $w \in L(A)$.
In this case, the function returns $\bot$ indicating there is no CXp.
%
%
Otherwise, it tries replacing the $\any$ character at each position
$i$ with $w[i]$ and checks if $L(W) \subseteq L(A)$.
If this is the case, we must leave position $i$ free, and we record
$i$ in $\mathcal{Y}_{min}$ At the end we have constructed a minimal
CXp since replacing any $\any$ character left over will leave no
counterexample.

\begin{algorithm}
  \begin{algorithmic}[1]
    \Procedure{ExtractCXp}{$\mathcal{Y}$: Candidate, $A$: FA, $w$: word, $l$:~lower replacement, $u$: upper replacement}
        \LComment{Build a regular expression with the $\mathcal{Y}$ candidate e.g., if $\mathcal{Y}=\{1,2\}, w=abc, l=1, u=\infty \rightarrow W=\any^+\any^+c$}
        \State $W\gets \Call{CXp}{\mathcal{Y}, w, l, u}$
        \If{$L(W) \subseteq L(A)$} \State \textbf{return} $\bot$
        \EndIf
        \ForAll {$i \in \mathcal{Y}$}
            \State $W\gets\Call{CXp}{\mathcal{Y} \setminus \{i\}, w, l, u}$
            \If{$L(W)\not\subseteq L(A)$}
                \State $\mathcal{Y}\gets \mathcal{Y}\setminus \{i\}$
            \EndIf
        \EndFor
      \State \Return{$\mathcal{Y}$}
      \Comment{Result minimal CXp}
    \EndProcedure
  \end{algorithmic}
  \caption{ExtractCXp}
\end{algorithm}
}

Note for DFAs, CXp extraction for $\lwa_1^1$, $\lwa_1^\infty$, and $\lwa_0^\infty$
can be done similarly in $O(|\Sigma|mn^2)$ time.
\ignore{
Similarly to \textsc{ExtractAXp}, procedure \textsc{ExtractCXp} has
the complexity of
\begin{corollary}
    Given a DFA $A$ with $m$ states and a word $w$ s.t. $\lvert
    w\rvert=~n$, finding a CXp for $w \in L(A)$ can be done in
    $O(\lvert\Sigma\rvert mn^2)$ time.
\end{corollary}
}

\ignore{
\section{Propositional Encoding} \label{sec:sat_encoding}

Here we describe a propositional encoding of the operation of a DFA
$R$ on words of length $k$.
This encoding can be used to compute AXps and CXps for accepted or
rejected words by a DFA, with the use of modern SAT
solving~\cite{biere2021handbook}.

Given a DFA $R=({Q,\Sigma, \delta, q_0, F})$ and a length $k$, the set
of variables to encode all accepted computations of size $k$, $q_0
\xrightarrow{w[1]} q_1 \xrightarrow{w[2]} q_2 \xrightarrow{w[3]} \dots
\xrightarrow{w[k]} q_k$ is shown below followed by
Algorithm~\ref{tab:encoding_computation} describing the constraints used:
\begin{enumerate}
    \item $x_{i,c}$, with $1\leq i \leq k$ and $c\in \Sigma$.
        $x_{i,c}=1$ iff char $w[i]=c$. They represent the symbols of
        the computation.
    \item $y_{i,j}$, with $1\leq i \leq k+1$ and $j\in Q$. $y_{i,j}=1$
        iff state $q_i=j$. They represent the states of the
        computation.
\end{enumerate}

\begin{table}[ht]
\begin{center}
\begin{tabular}{ c c c l }
 \hline
 & Constraint  & Range & \multicolumn{1}{c}{Semantic} \\
 \hline
 (1) & $y_{1,q_1}$ & & The first state is $q_1$ for position $0$ \\
 \hline
 (2) & $(y_{i,j}\land x_{i,c}) \rightarrow y_{i+1,j'}$ & \makecell{$i\in\{1,\ldots,k\}$,\\$j\in Q, c\in \Sigma$,\\$j'=\delta(j,c)$} & \makecell[l]{
 A state $q_{i-1}$ and symbol $w[i]$ in the \\ sequence of computations force the next \\ state to be $q_i=\delta(q_{i-1, w[i]})$}\\
 \hline
     (3) & $\sum\limits_{c\in \Sigma} x_{i,c} = 1$ & $i\in \{1,\ldots,k\}$ & \makecell[l]{
 There is exactly one symbol $c$ for each \\ position $i$ in a word}\\
 \hline
         (4) & $\sum\limits_{j\in Q} y_{i,j} = 1$ & $i\in \{1,\ldots,k\}$ & \makecell[l]{
 There is exactly one state $j$ for each position \\ $i$ in the sequence of the computation}\\
 \hline
 (5) & $\bigvee\limits_{f \in F} y_{k, f}$ & & \makecell[l]{The final state $q_k$ in the sequence of \\ computations is one of the accepting states}\\
 \hline
\end{tabular}
\end{center}
\caption{SAT encoding for a computation in a DFA}
\label{tab:encoding_computation}
\end{table}

In order to compute explanations for a given DFA $D=({Q,\Sigma,
\delta, q_0, F})$ and an accepted word $w$, let us consider the
constraints from Table~\ref{tab:encoding_computation} as \emph{hard}
constraints $\mathcal{H}$ for the complement of $D$, $R=({Q,\Sigma,
\delta, q_0, \overline{F}})$, with $k=|w|$.
Additionally, consider the set of \emph{soft} unit clauses
$\mathcal{S}=\{(x_{i, w[i]}) \mid i \in \{1,\ldots,k\}\}$ encoding the
input word $w$.
Note that if $\mathcal{S}$ holds, the input word is fixed, and the
first constraints (1-4) force $D$ and $R$ to follow the same
computation, as they share the same states, alphabet, transitions, and
initial state, ultimately reaching the same final state $q_k$.
Since $w\in L(D)$, we have $q_k\in F$, which implies that $q_k\not\in
\overline{F}$ and the last constraint (5) is unsatisfiable.

Thus, to explain $w\in L(D)$, we use the \textit{hard} constraints
$\mathcal{H}$ encoding the complement automaton $R=\overline{D}$, and
\textit{soft} constraints $\mathcal{S}$ encoding the input word. Since
$w\not\in L(R)$, we get the following unsatisfiable formula:
$$\mathcal{S} \land \mathcal{H}\models \bot$$
For this unsatisfiable formula, and given the definitions of AXp and
CXp, \cite{axpcxp} has proved that AXp are $MUS$ and CXp are $MCS$.

Using this approach, we can compute AXp and CXp for DFA through MUS
and MCS enumeration.
In the following sections, we describe a more efficient model to
compute equivalent explanations considering NFA, also exploiting the
duality relationship between AXp and CXp as we can do here.

\begin{example}
    Consider the automaton $A_1$ depicted in Figure~\ref{fig:fa_complement_copy}
    and the word $w=\texttt{aab}\in L(A_1)$.
    To explain why $w\in L(A_1)$, we encode the constraints from Table~\ref{tab:encoding_computation}
    as a SAT formula for its complement $A_2$, also depicted in Figure~\ref{fig:fa_complement_copy}.

    Firstly, we encode the word $w$ as a set of soft constraints $\mathcal{S}=\{(x_{1,a}), (x_{2,a}), (x_{3,b})\}$.
    Then, we encode the constraints from Table~\ref{tab:encoding_computation} as hard constraints $\mathcal{H}$.

    The figure Figure~\ref{fig:sat_example} shows the propositional encoding of the computation for the word $w=\texttt{aab}$ in the DFA $A_2$, which correspond to the first fourth constraints.
    The constraint 5 is not shown in the figure, but it is encoded as a disjunction of the final states of the computation, i.e., $\bigvee\limits_{f \in F} y_{k, f} = y_{3,q_{1}}\lor y_{3,q_{2}}$ for $A_2$.
    Observe that the computation with the input word $w$ force the final state to be $q_2$, which is not an accepting state, making the formula unsatisfiable.
    Finding the minimal unsatisfiable subset involves identifying the minimal indexes in the word that ensures the rejection in $A_2$. Since $A_2$ is a complement of $A_1$, ensuring rejection in $A_2$ also guarantees acceptance in $A_1$
\end{example}

\begin{figure}[ht]
    \centering
    \scalebox{0.7}{\begin{tikzpicture}[thick,node distance=2cm, every node/.style={scale=1.2}]
    \node (q1) [draw=none] {$q_{1}$};
    \node (q2) [right=4.5cm of q1, draw=none, circle] {$q_{2}$};
    \node (q3) [right=4.5cm of q2, draw=none, circle] {$q_{2}$};
    \node (q4) [right=4.5cm of q3, draw=none, circle] {$q_{3}$};

    \draw[->] (q1) -- (q2) node[midway, above] (x0) {$a$};
    \draw[->] (q2) -- (q3) node[midway, above] (x1) {$a$};
    \draw[->] (q3) -- (q4) node[midway, above] (x2) {$b$};

    \node[above=0.5cm of q1] (y_sat_1) {$y_{1,q_1} = 1$};
    \node[above=0.7cm of x0] {$x_{0,a} = 1$};
    \node[above=0.2cm of x0] (x_sat_0) {$x_{0,b} = 0$};
    \node[above=0.7cm of x1] {$x_{1,a} = 1$};
    \node[above=0.2cm of x1] (x_sat_1) {$x_{1,b} = 0$};
    \node[above=0.7cm of x2] {$x_{2,a} = 0$};
    \node[above=0.2cm of x2] (x_sat_2) {$x_{2,b} = 1$};

    \node[above=0.6cm of q2] (y_sat_2) {$y_{1,q_1} \land x_{1,a} \Rightarrow y_{1,q_2}$};
    \node[above=0.6cm of q3] (y_sat_3) {$y_{1,q_2} \land x_{2,a} \Rightarrow y_{2,q_2}$};
    \node[above=0.6cm of q4] (y_sat_4) {$y_{2,q_2} \land x_{2,b} \Rightarrow y_{3,q_3}$};

    \draw[dashed, blue, ->] (x_sat_0) -- (x0);
    \draw[dashed, blue, ->] (x_sat_1) -- (x1);
    \draw[dashed, blue, ->] (x_sat_2) -- (x2);

    \draw[dashed, blue, ->] (y_sat_1) -- (q1);

    \draw[dashed, blue, <-] (q2) -- ++(1.3, 1);
    \draw[dashed, blue, <-] (q3) -- ++(1.3, 1);
    \draw[dashed, blue, <-] (q4) -- ++(1.3, 1);

\end{tikzpicture}}
    \caption{Example of propositional encoding of a computation in a DFA.}
    \label{fig:sat_example}
\end{figure}
}

\section{Enumerating All Explanations}\label{sec:alg}

\ignore{
In this section we describe a novel approach to compute Formal Explanations
for Finite Automata and a given word. With this approach, we use a function
to check whether the language of a candidate explanation is a subset of the
language of the target FA being explained.
\pjs{Please use the macro $\any$ to refer to the any symbol, currently
  defined as $\Sigma$}
We adopt the symbol ``$.$" (dot) to represent any symbol in the alphabet $\Sigma$, as it is widely used in regular languages for this purpose.
}

While a single AXp/CXp extraction is quite efficient, for unbiased
explanations we want to extract all AXps.
Explanation
enumeration poses other challenges, since the number of AXps can be
exponential both in the size of the DFA $\mathcal{A}$ and in the
number of CXps.
That is a consequence of Proposition~\ref{thm:duality}.
Whenever one type of explanation (either AXp or CXp) is a collection
of disjoint sets, the collection of minimal hitting sets, i.e. dual
explanations (either CXp or AXp), is exponential (see appendix for a
concrete example).

%



Exhaustive enumeration of all AXps and CXps is achievable by
exploiting the minimal hitting set duality
between these explanations, similar to the
MARCO algorithm \cite{pms-aaai13,marco_enum_grow_shrink,
max_marco_enum} used in the context of over-constrained systems.
%
The process results in collecting all target and dual explanations.

Algorithm~\ref{alg:enum-adaptative} enumerates all explanations, both
AXps and CXps.
It can run in two modes: with $b = \mathit{true}$ it \emph{targets}
AXps with dual explanations being CXps, and if $b = \mathit{false}$ it
targets CXps with dual AXps.
Given an automaton $\mathcal{A}$ and word $w$ to explain, it
initialises the sets of target and dual explanations $\mathbb{E}_t$
and $\mathbb{E}_d$ as $\emptyset$.
It generates a new candidate target explanation $\mu$ as a minimal
hitting set of the dual sets $\mathbb{E}_d$ not already in or a
superset of something in $\mathbb{E}_t$.
This candidate generation can be made efficient using a Boolean
satisfiability (SAT) solver set to enumerate \emph{minimal} or
\emph{maximal
models}~\cite{giunchiglia-ecai06,marco_enum_grow_shrink}.
Initially, $\mu$ will be the empty set.
It then checks whether this candidate $\mu$ is a target explanation
using \textsc{IsTargetXP}.
Depending on the target type, it builds the regular language $W$
defined by $\mu$, by fixing or freeing $\mu$'s symbols, and checks the
\emph{language inclusion}.
The candidate $\mu$ is a target explanation if the check agrees with
$b$.
If this is the case, we simply add $\mu$ to
$\mathbb{E}_t$.
Otherwise, we extract a minimal subset $\nu$ of the
complement of $\mu$ using \textsc{ExtractDualXP}, which uses $b$ to
call the correct minimization procedure (either \textsc{ExtractCXp}
or \textsc{ExtractAXp}).
We then add counterexample $\nu$ to $\mathbb{E}_d$.
The loop continues until no new candidate hitting sets remain for
$\mathbb{E}_d$, in which case we have enumerated all target and dual
explanations.



\begin{algorithm}[tb]
  \caption{\textsc{XPEnum} -- Explanation Enumeration}
\label{alg:enum-adaptative}
\textbf{Input}: Automaton $\mathcal{A}$, word $w$, bounds $l$, $u$, AXp/CXp flag $b$ \\
\textbf{Output}: Explanation sets $\mathbb{E}_t$ (target) and $\mathbb{E}_d$ (dual)

\begin{algorithmic}[1]
  \STATE $(\mathbb{E}_t , \mathbb{E}_d) \gets (\emptyset, \emptyset)$
  \WHILE{true}
    \STATE $\mu \gets \textsc{MinimalHS}(\mathbb{E}_d, \mathbb{E}_t)$  \label{ln1:mhs}
    \IF{$\mu = \bot$}
      \STATE \textbf{break}
    \ENDIF
    \IF{\textsc{IsTargetXp}($\mu, \mathcal{A}, w, l, u, b$)} \label{ln1:check}
      \STATE $\mathbb{E}_t \gets \mathbb{E}_t \cup \{\mu\}$ \hfill \textit{// collect target explanation} \label{ln1:targrec}
    \ELSE
      \STATE $\nu \gets \textsc{ExtractDualXp}(\overline{\mu}, \mathcal{A}, w, l, u, b)$ \label{ln1:extract}
      \STATE $\mathbb{E}_d \gets \mathbb{E}_d \cup \{\nu\}$ \hfill \textit{// collect dual explanation} \label{ln1:dualrec}
    \ENDIF
  \ENDWHILE
  \STATE \textbf{return} $(\mathbb{E}_t, \mathbb{E}_d)$
\end{algorithmic}
\end{algorithm}

\begin{figure}[tb]
  \centering
  \begin{tikzpicture}[
  >-stealth,
  thick,
  axp/.style={fill=black, regular polygon, regular polygon sides=4, inner sep=1pt, minimum size=3pt},
  cxp/.style={fill=black, regular polygon, regular polygon sides=3, inner sep=1pt, minimum size=3pt},
  projection/.style={<->, dashed, shorten <=2pt, shorten >=2pt},
  groupbox/.style={draw, rectangle, rounded corners=4pt, inner sep=2pt}
]
  \node at (0,3) {Abductive Explanations};
  \node at (4.5,3) {Contrastive Explanations};

  \node[axp,label=left:1] (axp1) at (0,2.5) {};
  \node[axp,label=left:2] (axp2) at (0,2.1) {};
  \node at (0,1.6) {$L(\texttt{a} \Sigma\Sigma \texttt{c})$};

  \node[axp,label=left:1] (axq1) at (0,0.9) {};
  \node[axp,label=left:4] (axq4) at (0,0.5) {};
  \node at (0,0) {$L(\texttt{ac} \Sigma \Sigma)$};

  \node[cxp,label=right:1] (cxp1) at (4.5,2.3) {};
  \node at (4.5,1.9) {$L(\Sigma \texttt{ccc})$};

  \node[cxp,label=right:2] (cxp2) at (4.5,0.9) {};
  \node[cxp,label=right:4] (cxp4) at (4.5,0.5) {};
  \node at (4.5,0) {$L(\texttt{a} \Sigma \texttt{c} \Sigma)$};

  \node[coordinate] (padR1) at ([xshift=-0.4cm]axp1) {};
  \node[coordinate] (padR2) at ([xshift=-0.4cm]axq1) {};
  \node[coordinate] (padR3) at ([xshift=0.4cm]cxp1) {};
  \node[coordinate] (padR4) at ([xshift=0.4cm]cxp2) {};

  \node[groupbox, fit=(axp1)(axp2)(padR1), inner ysep=4pt] {};
  \node[groupbox, fit=(axq1)(axq4)(padR2), inner ysep=4pt] {};
  \node[groupbox, fit=(cxp1)(padR3), inner ysep=4pt] {};
  \node[groupbox, fit=(cxp2)(cxp4)(padR4), inner ysep=4pt] {};

  \draw[projection] (axp1) -- (cxp1);
  \draw[projection] (axq1) -- (cxp1);
  \draw[projection] (axp2) -- (cxp2);
  \draw[projection] (axq4) -- (cxp4);
\end{tikzpicture}
  \caption{
    Duality between AXps and CXps in $\lwa_1^1$.
    AXps fix the characters at given indices;
    CXps free them.
  }
  \label{fig:mhs_explanations_dot}
\end{figure}
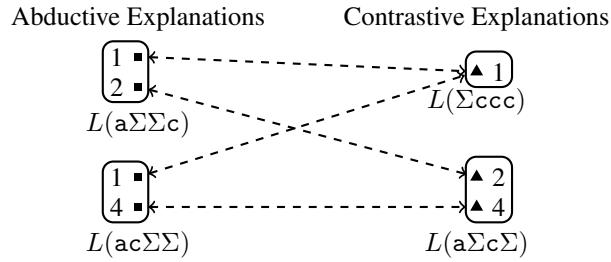
\begin{table}[tb]
\centering
\small 
\caption{Step-by-step enumerating all explanations targeting CXp for word
\texttt{accc} in the DFA illustrated in Figure~\ref{fig:dpi_axps} using $\lwa_1^1$.
Here, $\mu$ represents the Minimal Hitting Set (MHS) calculated at each
iteration.}
  \begin{tabular}{|llcccc|}
    \hline
    $\mathbb{E}_\text{CXp}$  & $\mathbb{E}_\text{AXp}$            & $\mu$ (\textsc{MHS})       & $\cxp_1^1(\mu, w)$ & \textsc{IsCXp?} & Result \\ \hline
    $\emptyset$              & $\emptyset$                        & $\emptyset$   & \texttt{accc}    & \textsf{false}    & ExtractAXp: \texttt{a}$\any\any$\texttt{c} $\{1, 4\}$ \\ 
    $\emptyset$              & $\{1, 4\}$           & $\{1\}$     & $\any$\texttt{ccc}               & \textsf{true}     & $\mathbb{E}_\text{CXp} \gets \mathbb{E}_\text{CXp} \cup \mu$  \\
    $\{1\}$                  & $\{1, 4\}$           & $\{4\}$     & \texttt{acc}$\any$               & \textsf{false}    & ExtractAXp: \texttt{ac}$\any\any$ $\{1, 2\}$ \\
    $\{1\}$                  & $\{1, 4\}, \{1, 2\}$ & $\{2, 4\}$  & \texttt{a}$\any$\texttt{c}$\any$ & \textsf{true}     & $\mathbb{E}_\text{CXp} \gets \mathbb{E}_\text{CXp} \cup \mu$  \\
    $\{1\}, \{2, 4\}$        & $\{1, 4\}, \{1, 2\}$ & $\bot$      & \textbf{break}                   & ---               & ---                   \\
    \hline
  \end{tabular}

\label{tab:enumeration_step_by_step}
\end{table}
\begin{example}
  Table~\ref{tab:enumeration_step_by_step} traces the execution of
  Algorithm~\ref{alg:enum-adaptative} targeting CXps ($b =
  \mathit{false}$) for the DFA in Figure~\ref{fig:dpi_axps} and
  word $w=\texttt{accc}$ from Example~\ref{example_dot}.
  Figure~\ref{fig:mhs_explanations_dot} illustrates the duality
  relationship between the AXps and CXps.
\end{example}
\noffa{
In the context of explanation enumeration, we can define a formal feature
attribution for each position in the string, explaining how involved it is
in generating the acceptance.
\begin{example}[FFA]
  Consider the DFA and rejected word $w=\texttt{accc}$ illustrated in
  Figure~\ref{fig:dpi_axps}, with its AXps shown in
  Example~\ref{ex:allaxps}.
  We can generate a feature attribution for $w$'s rejection, from the
  AXps $\axp_1^1(\{1,2\},w)$ and $\axp_1^1(\{1,4\},w)$.
  The FFA is $\{ (1, 1), (2, 0.5), (4, 0.5)\}$, indicating that
  position 1 is the most important position for the rejection.
  The next most important positions are 2 and 4.
  And the position 3 does not play any role.
  \qed
\end{example}

While previous example shows the calculation of FFAs, the following
example illustrates how FFA can provide an intuition easily
representable graphically on why some of the decisions are critical in
rejecting a word by a DFA.

\begin{example}
  \label{ex:maze} Consider a language where words represent paths
  through a maze. A word is accepted if the path reaches the ``EXIT''
  cell without hitting walls. 
  Figure~\ref{fig:maze_experiments} exemplifies the computation of
  formal feature attribution (FFA) for the decisions made a in grid
  maze.
  In particular, Figure~\ref{fig:maze_rejected_path} shows one AXp of
  minimal cardinality (size one) corresponding to the final right
  arrow, which indicates that any path of length 13 ending with this
  symbol is rejected by the DFA using the explanation language
  $\lwa_1^1$. 
  Table~\ref{tab:enum_AXps_star} then shows all AXps and computes the
  FFA for each input symbol / decision.
  Finally, Figure~\ref{fig:maze_ffa} visually depicts the resulting
  attributions, where arrows represent the relative importance of
  specific decisions. 
  The most important symbol leading to rejection occurs at position
  11, the symbol at this step has the highest occurrence in the AXps,
  and thus the highest attribution. Considering the language
  $\lwa_1^1$, which fixes the word length, making the symbol ``up'' a
  bad decision when there are only two remaining symbols.
  \qed
\end{example}

\begin{figure}[t]
  \begin{subfigure}[b]{0.325\columnwidth}
    \centering
    \includegraphics[height=4.5cm]{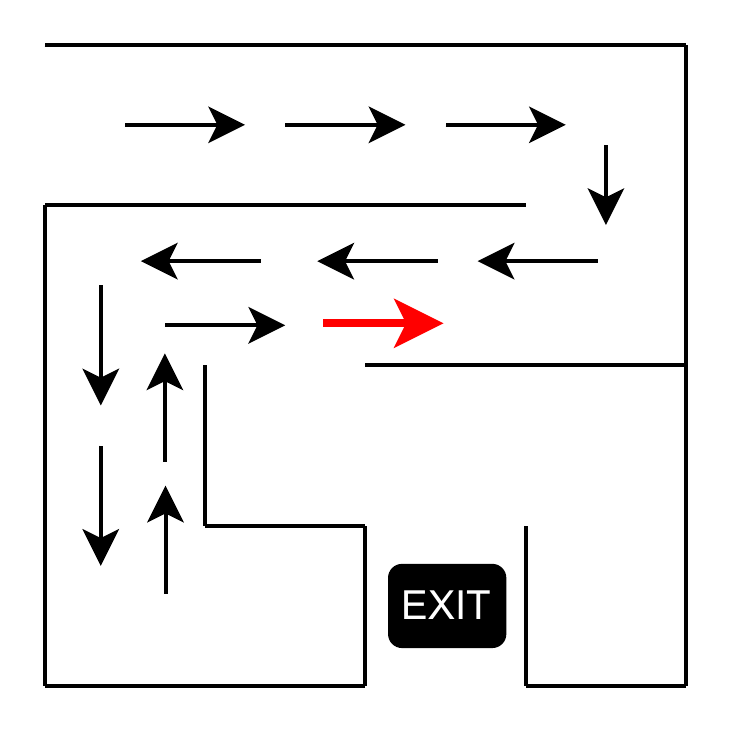}
    \caption{Example maze with a rejected path. An AXp is highlighted in red, showing that every word ending in $\rightarrow$ is rejected.}
    \label{fig:maze_rejected_path}
  \end{subfigure}
  \hfill
  \begin{subfigure}[b]{0.325\columnwidth}
    \centering
    \renewcommand{\arraystretch}{0.55}
    \begin{tabular}{lll} 
        \toprule
        \multicolumn{3}{l}{\textbf{AXps}} \\ \midrule
        \multicolumn{3}{p{0.9\linewidth}}{$\{13\}$, $\{8, 9\}$, $\{8, 10\}$, $\{10, 11\}$, $\{9, 11\}$, $\{7, 11\}$, $\{11, 12\}$, $\{8, 11\}$} \\ \midrule
        \textbf{Index} & \textbf{Freq.} & \textbf{FFA ($\frac{Freq.}{|AXps|}$)} \\ \midrule
        11 & \textbf{5} & 0.625 \includegraphics[width=0.7cm, valign=c]{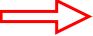} \\ 
        8 & 3 & 0.375 \includegraphics[width=0.8cm, valign=c]{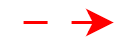} \\ 
        9, 10 & 2 & 0.250  \includegraphics[width=0.8cm, valign=c]{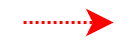} \\ 
        7, 12, 13 & 1 & 0.125  \includegraphics[width=0.8cm, valign=c]{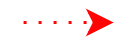} \\ \bottomrule
    \end{tabular}
    \vspace{0.23cm}
    \caption{All AXps in language $\lwa_1^1$ and FFA. Each FFA value is represented with an arrow of different shape.}
    \label{tab:enum_AXps_star}
  \end{subfigure}
  \hfill
  \begin{subfigure}[b]{0.325\columnwidth}
    \centering
    \includegraphics[height=4.5cm]{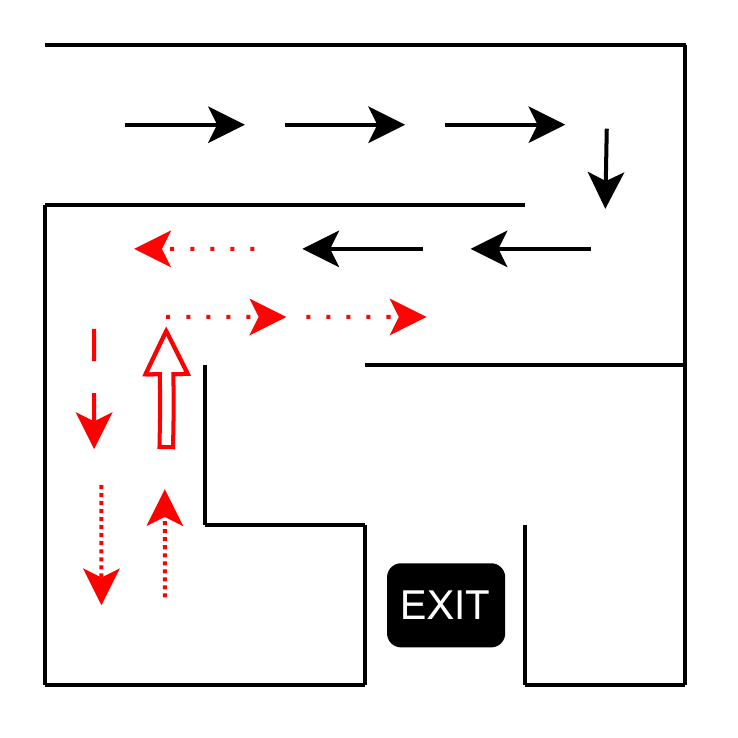}
    \caption{Illustration of FFAs in the maze. The arrows indicate the importance of each position in the path for rejection.}
    \label{fig:maze_ffa}
  \end{subfigure}

  \caption{FFA for maze using language $\lwa_1^1$.}
  \label{fig:maze_experiments}
\end{figure}

}

\subsection{Extra Heuristics.}
In practice, changing a single symbol in $w\in L(\mathcal{A})$ often
yields a counterexample, i.e. many CXps are singletons.
We can determine all singleton CXps for $w \in L(\mathcal{A})$ in
$\lwa_l^u$ by checking for each $i \in \{1,\ldots,|w|\}$ whether
$\cxp_l^u(\{i\},w) \not\subseteq L(\mathcal{A})$.
If this holds then $\cxp_l^u(\{i\},w)$ is a CXp.
If we run Algorithm~\ref{alg:enum-adaptative} targeting AXps ($b =
\mathit{true}$) then we can \emph{warm-start} the algorithm by
replacing $\mathbb{E}_d$ by the set of all singletons $\{i\}$ where
$\cxp_l^u(\{i\},w)$ is a CXp.
This approach reduces the number of iterations of
Algorithm~\ref{alg:enum-adaptative} and we will use it in the
experiments to speed-up AXp computation.
Another modification we can make to
Algorithm~\ref{alg:enum-adaptative} is to replace \textsc{MinimalHS},
with \textsc{MinimumHS}, which finds a \emph{minimum size} hitting set
not already explored.
This can be done by using, e.g., an efficient integer linear
programming (ILP) solver~\cite{gurobi} or a maximum satisfiability
(MaxSAT) solver~\cite{imms-jsat19}.
This will guarantee that the first target explanation we find is the
smallest possible.

\ignore{
Note that given a word $w$ and a DFA $A$, there is a collection of minimal non empty
AXps for $w \in L(A)$ if and only if $\Sigma^{|W|} \subseteq L(A)$

If we want to explain why $w \in L(A)$ and $\Sigma^{|W|} \subseteq L(A)$,
then we can free every character in $w$ and the language of the resulting automaton
$W$ is $\any \any ... \any$ since $w \in L(W)$ and $L(W) \subseteq L(A)$.
Thus, the empty AXp shows that there is no required symbol in $w$ for the acceptance,
more over there is no CXp either, since there is no way to change the word and change
the acceptance.
}

\section{Experiments}\label{sec:exp}

Given that the proposed explanations build on the definitions grounded
in formal logic, the purpose of our experiments is to determine how
performant they are in practice rather than to test how meaningful
they are to humans.
Also, as argued in Section~\ref{sec:prelim}, it is practically
challenging to select a \emph{best} feature selection explanation
(either AXp or CXp). 
This can be tackled by determining how \emph{important} individual
symbols in the input are for a given decision of a DFA.
A way to do that, formal feature attribution (FFA), requires one to
enumerate explanations.
Therefore, the primary objective of our experimental evaluation is to
assess the scalability of exhaustive explanation enumeration.
Note that while computing a single minimum edit distance explanation
requires polynomial time and so can be done efficiently, it struggles
with complete explanation enumeration.
FFA computation is illustrated with the Maze benchmarks in
Example~\ref{ex:maze}, where the parts of inputs with high FFA
represent the most critical decision point in a path.


Overall, we test Algorithm~\ref{alg:enum-adaptative} on three
benchmark families:
(1)~Deep Packet Inspection (DPI) rules,
(2)~a generated corpus of random FA,
(3)~DNA sequences containing a known motif, and
(4)~Mazes represented as DFA.
%
%
%
All the experiments were run on Ubuntu 20.04 LTS with an Intel Xeon
8260 CPU and 16 GB RAM.
The implementation builds on the PySAT
toolkit~\cite{imms-sat18,itk-sat24} to enumerate minimal hitting sets
and the library Mata~\cite{Mata_lib} for language inclusion checks.
Each run had a 600 seconds timeout.
Hereinafter, we focus on $\lwa_1^1$.
We compare three different approaches for computing all explanations:
with the flag set to \emph{true}, targeting AXps; with the flag set to
\emph{true}, initialized with all singleton CXps; and with the flag
set to \emph{false} targeting CXps.
All the plots are shown as \emph{cactus plots} sorting results by
runtime to compare overall effectiveness of the approaches.

\newcommand{\mysubsection}[1]{\paragraph{#1}}

\begin{wrapfigure}[31]{R}{0.55\textwidth}
\centering
    \includegraphics[width=0.55\textwidth]{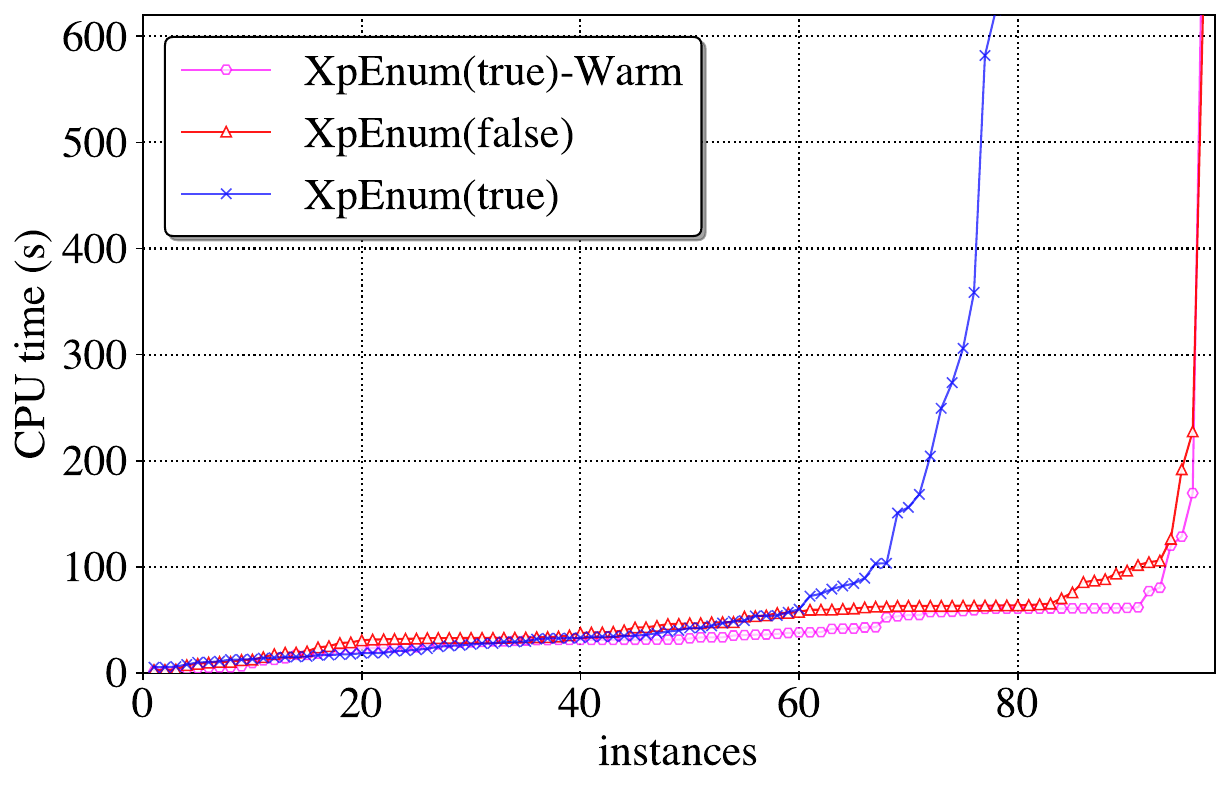}
    \vspace{-20pt}
    \caption{Deep Packet Inspection performance.}
    \label{fig:cactus_packet}
    \vspace{10pt}
    \includegraphics[width=0.55\textwidth]{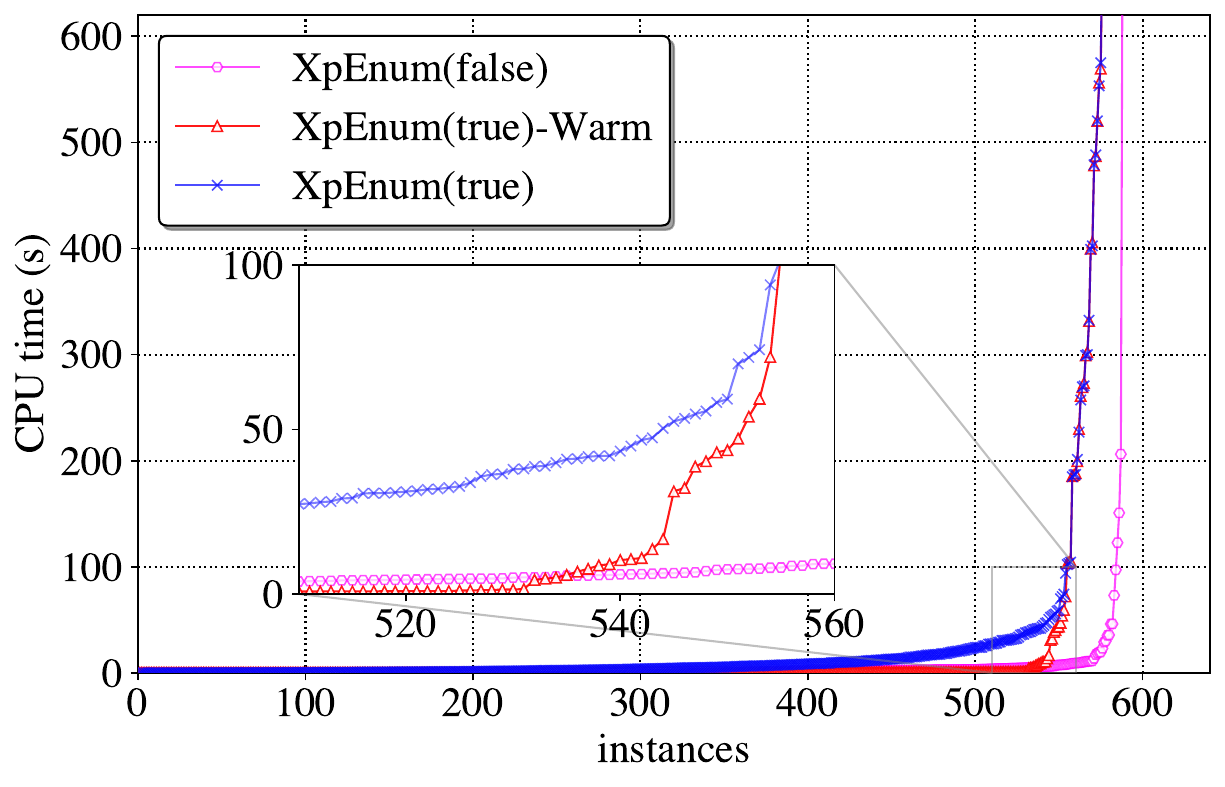}
    \vspace{-20pt}
    \caption{Generated FA corpus, with \emph{accepted} words.}
    \label{fig:cactus_corpus_accepted}
\end{wrapfigure}

\mysubsection{Deep Packet Inspection.}
Deep Packet Inspection identifies malicious traffic by
matching packet contents to known signature
patterns, often represented as regular
expressions. 

We use a dataset of 98 File Transfer Protocol (FTP) signatures from
\cite{packet_inspection}.\footnote{\url{https://pages.cs.wisc.edu/~vg/papers/raid2010/data/ftp-98.txt}}
The task is to check
whether a given word matches any of these regular expressions.

We created an FA with the union of the 98 regular expressions,
resulting in an (non-deterministic) FA with 30,590 states and
2,752,238 transitions.\footnote{The automata package we used cannot convert it to a DFA}

This large FA is used to test scalability of Algorithm~\ref{alg:enum-adaptative}.
The length of the input words range from 4 to 2557,
and the average length is 156.

Figure~\ref{fig:cactus_packet} depicts the performance of the three
modes of the Algorithm~\ref{alg:enum-adaptative}.
Observe that the algorithm scales with large automata, where the warm-starting
makes a huge difference, while targeting AXps alone is worse than
targeting CXps.

\mysubsection{Generated corpus of FA.}
We generated a corpus of FAs to test explanation enumeration for long
words matching a set of possible words.
This corpus of FAs is similar to the corpus used in the work
\cite{fixing_budget} in a different context.
First, we generate $m$ random words with length $l$ over the alphabet
$\{1,\ldots,d\}$ for $l\in \{5,10,15,20\}$, $m\in \{1,3,5,10\}$, $d\in
\{2,3,5,10\}$.
Then, we build an FA for each configuration such that it accepts the
language $\{\Sigma^* w \Sigma^* \mid w \in M\}$.
For each of the 64 resulting FAs, we generated 10 accepted and 10
rejected words of length~$\{i \times 100 \mid i \in \{1,\ldots,10\}\}$.

Figure~\ref{fig:cactus_corpus_accepted} shows how the three
alternatives perform on accepted words. The warm-started enumeration focusing on AXps is usually the best
method, but for the hardest instances, targeting CXps is more
effective due to the large number of CXps.
\mysubsection{DNA Sequence.}
\begin{wrapfigure}[16]{R}{0.55\textwidth}
\centering
    \vspace{-10pt}
    \includegraphics[width=0.55\textwidth]{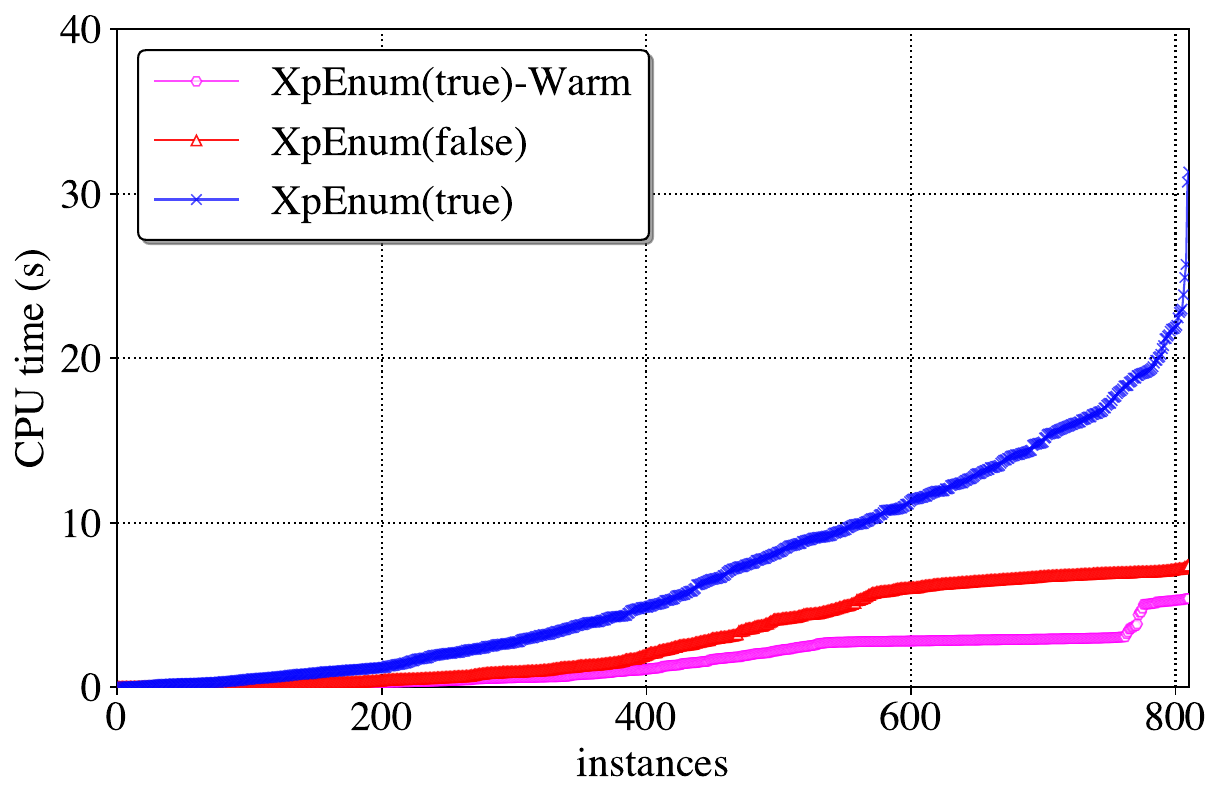}
    \vspace{-20pt}
    \caption{Motifs in DNA sequences.}
    \label{fig:cactus_dna}
\end{wrapfigure}
DNA is a molecule conformed by two complementary chains of
nucleotides. These nucleotides contain four types: adenine (A),
thymine (T), cytosine (C), and guanine (G).
DNA sequences contain motifs, which are recurring patterns that are
deemed to have a biological function~\cite{dna_motif}.

We used a benchmark of 25 datasets of real DNA sequences with a known
motif~\cite{dna_motifs_benchmark}.\footnote{Available at \url{https://tare.medisin.ntnu.no/pages/tools.php}.}
Each dataset has multiple pairs of $(\text{DNA sequence},
\text{motif})$; in total, there are 810 sequences.
This case is used to test the ability to find AXps for words with
different lengths.
The shortest sequence has 62 nucleotides, the longest 2000 and the
average length is 1246.

Figure~\ref{fig:cactus_dna} shows once more that
targeting AXps with warm-start is the superior approach, and without
warm-start, targeting CXps is better than targeting AXps.

\mysubsection{Maze.}

\begin{wrapfigure}[14]{R}{0.55\textwidth}
\centering
    \includegraphics[width=0.55\textwidth]{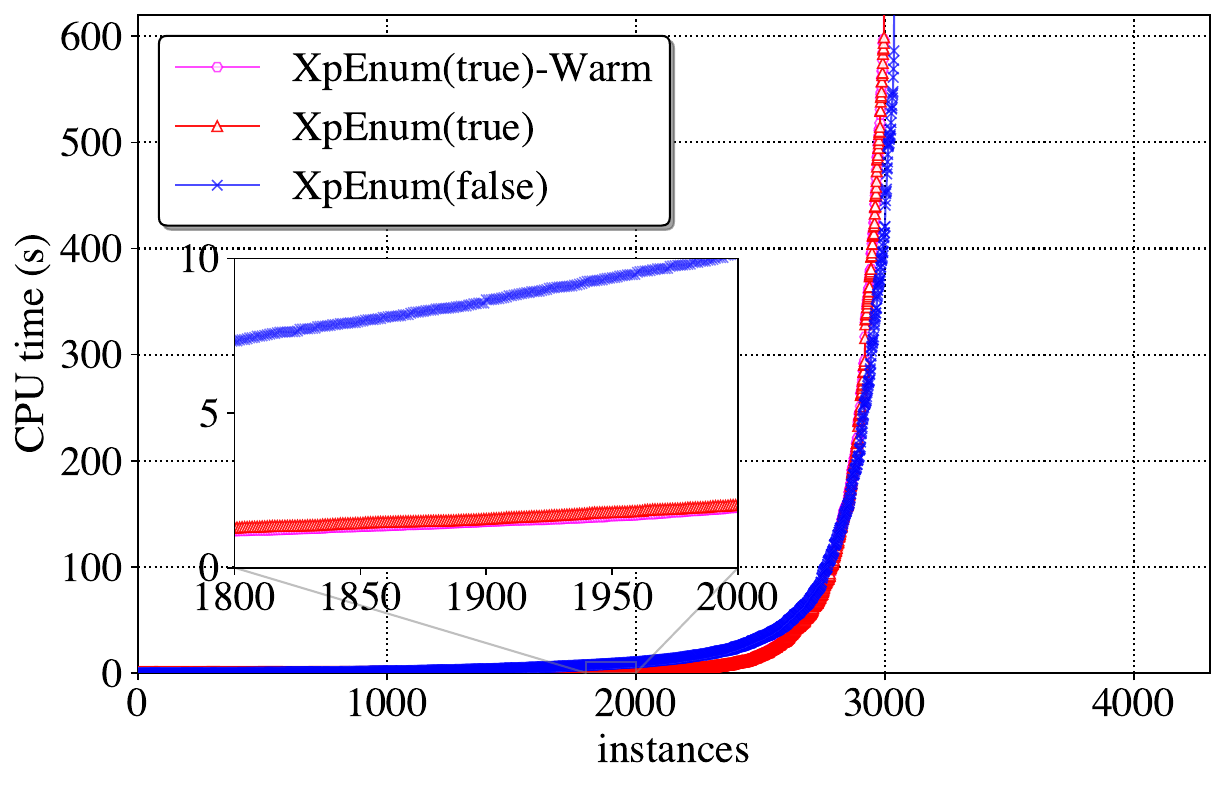}
    \vspace{-20pt}
    \caption{Rejected paths in mazes.}
    \label{fig:cactus_maze}
\end{wrapfigure}
This benchmark set is based on the Example~\ref{ex:maze},
we generate 861 mazes with dimensions
ranging from $10\times 10$ to $50\times 50$. 
For each height $h \in \{10,\dots,50\}$, we produce mazes of size
$h\times w$ for all $w \in \{h,\dots,50\}$. 
This results in a total of $\sum_{h=10}^50 (51 - h) = 861$ distinct
maze sizes.

Walls were randomly placed with a 1/3 probability per cell.
To generate a rejected path, we randomly select a path from the start
to the end of the maze and introduce between 1 and 5 random changes to
the path, ensuring that the modified path is still rejected by the
maze. 
If the maze does not have a solution, we repeat the process until we
find a solvable maze for that dimension.

Figure~\ref{fig:cactus_maze} evaluates complete explanation
enumeration, necessary for defining the FFA.
Given that this is a complex task, $\approx1000$ problem instances
reach the 600s time out.
Targeting AXps is preferable except for the hardest instances. 
The warm start does not improve performance here. 

\mysubsection{Summary.}
Overall, our results indicate that our approach to enumerate AXps and
CXps for finite automata is scalable.
In particular, we are able to solve the problem for the automata with
the size varying from 10 to 2,500, excluding the Deep Packet
Inspection scenario, which has many more states.
The average number of AXps and CXps is 50 and 255, respectively.
The average number of unit-size CXps is 8.
The average size of an AXp is 11 while the average size of a CXp is
15.
Based on the instances where the enumeration process was completed,
the average completion times were 23.97, 21.59, and 19.36 seconds for
the Algorithm \ref{alg:enum-adaptative} with flags \emph{false},
\emph{true}, and \emph{true} initialized with all singleton CXps,
respectively.
This shows better performance when targeting AXps, and warm-starting
with all singleton CXps.
However, harder cases favor targeting CXps directly.

Finally, the general observations made here for $\lwa_1^1$ also apply
to $\lwa_1^\infty$ and
$\lwa_0^\infty$ (results can be found in the appendix).
Importantly, as suggested by Propositions~\ref{rel-cxp}--\ref{rel-axp},
the structure and thus the number of explanations for $\lwa_1^\infty$
and $\lwa_0^\infty$ affect the performance of the enumeration
and highlight an advantage of targeting CXps.

\section{Conclusions and Related Work}\label{sec:conc}

We introduce a first approach to computing formal explanations for
finite automata (FA) decisions $w \in L(\mathcal{A})$.
We show we can usually efficiently compute a single AXp, even for
large automata and long words.
And for many problems we can enumerate all explanations, which enables
us to determine formal feature attribution (FFA) for all the input
symbols.

The results of our work are applicable to problems that often rely on
finite automata, including lexical analysis, string searching, and
pattern matching.
While AXps can be employed to understand why certain acceptance or
rejection decisions have been made, CXps can be used to suggest how
those decisions can be altered.
Furthermore, a significant advantage of our approach is that these
formal explanations are verifiable. 
The proposed AXps and CXps provide logical guarantees that can be
validated through formal methods, such as model checking.
Finally, FFA can be applied to reveal the importance of certain
features of the input words that are otherwise hidden.


%
%
%
%
%

%
%

\ignore{
\paragraph*{Acknowledgments}

The research of Peter Stuckey is partially supported by the Australian
Government through the Australian Research Council Industrial Transformation
Training Centre in Optimisation Technologies, Integrated Methodologies, and
Applications (OPTIMA), Project ID IC200100009.
We would like to thank Julian Gutierrez for his feedback on this work.
}

\bibliographystyle{plainurl}
\bibliography{explaining-fa-2026}

\appendix

\clearpage

\section{Supplementary Material}

\subsection{Propositions and proofs}

This section provides the complete set of formal
propositions and their corresponding proofs.

\setcounter{proposition}{0}

\begin{proposition}
    \label{thm:duality_appendix}
    Given $w\in L(\mathcal{A})$,
    assume that the sets of all AXps and CXps are
    denoted as $\mathbb{A}=\{X\subseteq\{1,\ldots,|w|\} \mid 
    \axp_l^u(X, w)\in\mathbb{W}\}$ and
    $\mathbb{C}=\{Y\subseteq\{1,\ldots,|w|\} \mid \cxp_l^u(Y,
    w)\in\mathbb{W}\}$, respectively.
    It holds that each $X\in\mathbb{A}$ is a
    minimal hitting set of $\mathbb{C}$ and, vice versa, each
    $Y\in\mathbb{C}$ is minimal hitting set of
    $\mathbb{A}$.
\end{proposition}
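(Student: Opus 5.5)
The plan is to reduce the statement to the standard minimal hitting set duality between AXps and CXps, as in~\cite{axpcxp}. To do so, I will recast explanations in $\lwa_l^u$ as subsets of positions $\{1,\ldots,n\}$, $n=|w|$, and define two predicates on $S\subseteq\{1,\ldots,n\}$. Let $\mathsf{WA}(S)$ denote that $\axp_l^u(S,w)\subseteq L(\mathcal{A})$, and let $\mathsf{WC}(S)$ denote that $\cxp_l^u(S,w)\not\subseteq L(\mathcal{A})$.

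The first step establishes two structural facts.

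\textbf{Monotonicity.} If $S\subseteq S'$, then $\axp_l^u(S,w)\supseteq\axp_l^u(S',w)$ and $\cxp_l^u(S,w)\subseteq\cxp_l^u(S',w)$. This holds because replacing $w[i]$ by $\any_l^u$ only enlarges the language, since $l\le1\le u$ gives $w[i]\in L(\any_l^u)$. So $\mathsf{WA}$ is upward closed, and $\mathsf{WC}$ is also upward closed.

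\textbf{Complementarity.} Directly from the definitions, $\cxp_l^u(S,w)=\axp_l^u(\overline{S},w)$, where $\overline{S}=\{1,\ldots,n\}\setminus S$. Hence $\mathsf{WC}(S)\iff\neg\mathsf{WA}(\overline{S})$.

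With these in hand, AXps are exactly the subset-minimal sets satisfying $\mathsf{WA}$, since maximality of languages corresponds to minimality of retained positions by monotonicity. Likewise, CXps are exactly the subset-minimal sets satisfying $\mathsf{WC}$.

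The key lemma is that $\mathsf{WA}(X)$ holds iff $X$ hits every CXp. For the forward direction, suppose $\mathsf{WA}(X)$ holds and $Y$ is a CXp with $X\cap Y=\emptyset$. Then $X\subseteq\overline{Y}$, so $\mathsf{WA}(\overline{Y})$ holds by upward closure. By complementarity this means $\neg\mathsf{WC}(Y)$, a contradiction. For the converse, suppose $\neg\mathsf{WA}(X)$. Then $\mathsf{WC}(\overline{X})$ holds, so $\overline{X}$ contains a minimal $Y$ satisfying $\mathsf{WC}$, which is a CXp disjoint from $X$. The symmetric lemma, that $\mathsf{WC}(Y)$ holds iff $Y$ hits every AXp, is proved identically with the roles exchanged.

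The conclusion then follows by minimality. An AXp $X$ is a minimal set satisfying $\mathsf{WA}$, which by the lemma is the same as a minimal set hitting all CXps, i.e.\ a minimal hitting set of $\mathbb{C}$. The dual statement for each $Y\in\mathbb{C}$ is analogous.

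Two details need care. The first is the degenerate cases. If $\mathbb{C}=\emptyset$, the only minimal hitting set is $\emptyset$, and indeed $\mathsf{WA}(\emptyset)$ then holds, so $\emptyset$ is the unique AXp. If $\mathsf{WA}$ fails for the full set, which cannot happen since $w\in L(\mathcal{A})$, the argument would break. The second, and the main point to check, is monotonicity for $l=0$. With $l=0$ a freed position may be replaced by $\epsilon$, and one must confirm that the languages still nest under inclusion of positions. They do, since each position is substituted independently, and concatenation is monotone in each argument.
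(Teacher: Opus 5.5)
Your argument is correct under the position-set reading of $\mathbb{A}$ and $\mathbb{C}$, and it takes a more self-contained route than the paper. The paper's proof is a one-line reduction. It treats the positions of $w$ as features, asserts that $\axp_l^u(S,w)$ (resp.\ $\cxp_l^u(S,w)$) is an AXp (resp.\ CXp) iff $S$ is an AXp (resp.\ CXp) of the classification problem ``$\mathcal{A}$ accepts $w$'', and then invokes the general duality of \cite{axpcxp} as a black box. You instead re-derive that duality directly from two facts: upward closure of $\mathsf{WA}$ and $\mathsf{WC}$, and the complementarity $\cxp_l^u(S,w)=\axp_l^u(\overline{S},w)$. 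Upward closure needs only $w[i]\in L(\any_l^u)$, i.e.\ $l\le 1\le u$, plus monotonicity of concatenation. From these you prove the lemma that $\mathsf{WA}(X)$ holds iff $X$ hits every CXp, and its mirror image. This makes explicit exactly which properties of the explanation language the duality rests on. It also covers $\lwa_1^\infty$ and $\lwa_0^\infty$ without checking that string-valued ``features'' fit the classifier framework of \cite{axpcxp}. They do fit, with $\mathbb{D}_i=L(\any_l^u)$, but the paper never says so. The paper's route buys brevity by reusing a known theorem.

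One caveat applies to both arguments. Each identifies the AXps/CXps of the definitions, which are maximal (resp.\ minimal) \emph{languages} under inclusion, with the subset-minimal position sets satisfying $\mathsf{WA}$ (resp.\ $\mathsf{WC}$). Your justification ``by monotonicity'' only gives $S'\subseteq S\Rightarrow \axp_l^u(S',w)\supseteq\axp_l^u(S,w)$. The identification also needs the converse, and that converse fails for $\lwa_0^\infty$.

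Take $w=\texttt{aaa}$ and $L(\mathcal{A})=\any^*\texttt{a}\any^*$. Every nonempty $S$ satisfies $\mathsf{WA}$, so the subset-minimal sets are $\{1\},\{2\},\{3\}$. Yet $\axp_0^\infty(\{1\},w)=\texttt{a}\any^*\subsetneq\any^*\texttt{a}\any^*=\axp_0^\infty(\{2\},w)$. Read literally at the language level, only $\{2\}$ would be an AXp, and then the unique CXp $\{1,2,3\}$ is not a \emph{minimal} hitting set of $\{\{2\}\}$.

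So the proposition must be read with $\mathbb{A}$ and $\mathbb{C}$ as the subset-minimal position sets. That is what both you and the paper effectively do (the paper via its ``iff'' claim above), and under that reading your proof is complete. For $l=1$ and $|\any|\ge 2$ the identification is genuinely valid. There, the words of length $|w|$ in $\axp_1^u(S,w)$ are exactly $\axp_1^1(S,w)$, and this set determines $S$.
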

\begin{proof}
    Using the explanation language $\lwa$, we can treat
    the symbols of input $w$ as \emph{features}, and
    $\axp_l^u(S,w)$ is an AXp iff $S$ is an AXp of the classification
    problem $\kappa(w) = \mbox{``accept''}$ s.t.\ $\kappa$ is the DFA
    applied to $w$.
    Similarly $\cxp_l^u(S,w)$ is a CXp iff $S$ is a CXp of the
    classification problem $\kappa(w) = \mbox{``accept''}$.
    Hence, we directly have the hitting set duality between AXps
    and CXps of $w \in L(\mathcal{A})$.
\end{proof}

\begin{proposition}\label{rel-axp_appendix}
    Let $w\in L(\mathcal{A})$ and $l_1..u_1 \supseteq l_2..u_2$.
    If $S$ is such that
    $\axp_{l_1}^{u_1}(S,w)$ is a weak AXp for $w \in L(\mathcal{A})$ using
    $\lwa_{l_1}^{u_1}$
    then $\axp_{l_2}^{u_2}(S, w)$ is a weak AXp using $\lwa_{l_2}^{u_2}$.
\end{proposition}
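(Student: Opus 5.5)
The plan is to show the language inclusion $\axp_{l_2}^{u_2}(S,w) \subseteq \axp_{l_1}^{u_1}(S,w)$ and then chain it with the hypothesis. By definition, $\axp_{l_1}^{u_1}(S,w)$ is a weak AXp, so $\axp_{l_1}^{u_1}(S,w) \subseteq L(\mathcal{A})$. The inclusion then gives $\axp_{l_2}^{u_2}(S,w) \subseteq L(\mathcal{A})$.

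Membership of $w$ is also needed. Since $l_2 \le 1 \le u_2$ (this holds for every language in the family $\lwa_l^u$), we get $w \in \axp_{l_2}^{u_2}(S,w)$, as the paper notes when introducing $\lwa_l^u$. Moreover, $\axp_{l_2}^{u_2}(S,w)$ is by construction a member of $\lwa_{l_2}^{u_2}$. Together these make it a weak AXp using $\lwa_{l_2}^{u_2}$.

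The inclusion itself is proved symbol by symbol. The expression $\axp_l^u(S,w)$ is the concatenation $s_1 \cdots s_n$, where $s_i = w[i]$ for $i \in S$ and $s_i = \any_l^u$ otherwise. The first step is the observation $L(\any_{l_2}^{u_2}) \subseteq L(\any_{l_1}^{u_1})$. This holds because $L(\any_l^u)$ is exactly the set of strings whose length lies in the range $l..u$, and the hypothesis $l_1..u_1 \supseteq l_2..u_2$ means every admissible length in the second range is admissible in the first. The case $u = \infty$ is covered by the definition $\Sigma^l\Sigma^*$, which again is all strings of length at least $l$.

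The second step is monotonicity of concatenation. If $L(R_i) \subseteq L(R'_i)$ for each $i$, then $L(R_1\cdots R_n) \subseteq L(R'_1\cdots R'_n)$. This is immediate from the definition of concatenation, since any decomposition $x = x_1\cdots x_n$ with $x_i \in L(R_i)$ also witnesses $x \in L(R'_1\cdots R'_n)$. Positions in $S$ contribute the identical factor $w[i]$ in both expressions, and the remaining positions use the inclusion from the first step.

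No step is a real obstacle. The only point needing care is to justify the identification of $L(\any_l^u)$ with the strings of length in $l..u$ from the paper's definition via $(\Sigma|\emptyset)$ factors. Here one should read $(\Sigma|\emptyset)$ as contributing either one symbol or the empty word, i.e.\ as $\Sigma$ or $\epsilon$, consistent with the stated intent.
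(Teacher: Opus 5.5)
Your proposal is correct and follows the paper's proof. Both arguments establish $L(\Sigma_{l_2}^{u_2}) \subseteq L(\Sigma_{l_1}^{u_1})$, lift this positionwise to $\axp_{l_2}^{u_2}(S,w) \subseteq \axp_{l_1}^{u_1}(S,w)$, and chain it with $\axp_{l_1}^{u_1}(S,w) \subseteq L(\mathcal{A})$. Your version is slightly more careful than the paper's: it also checks $w \in \axp_{l_2}^{u_2}(S,w)$ via $l_2 \le 1 \le u_2$, and it rightly reads the $(\Sigma|\emptyset)$ factors in the definition of $\Sigma_l^u$ as $(\Sigma|\epsilon)$, which the length characterisation needs.
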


\begin{proof}
    The relation between the intervals  implies that
    $\Sigma_{l_1}^{u_1}\supseteq~\Sigma_{l_1}^{u_2}$.
    Note that a larger set of replacements yields a larger set of languages
    for any $S$ subset of $\{1, \dots \vert w \vert\}$,
    and we have that
    $\axp_{l_1}^{u_1}(S, w) \supseteq \axp_{l_2}^{u_2}(S, w)$.

    $\axp_{l_1}^{u_1}(S, w)$ is a weak AXp, so
    $\axp_{l_1}^{u_1}(S, w)\subseteq L(\mathcal{A})$ and hence
    $\axp_{l_2}^{u_2}(S, w) \subseteq L(\mathcal{A})$. Thus, $\axp_{l_2}^{u_2}(S,
    w)$ is a weak AXp for $w \in L(\mathcal{A})$ using $\lwa_{l_2}^{u_2}$.
\end{proof}

\begin{proposition}\label{rel-cxp_appendix}
    Let $w\in L(\mathcal{A})$, and $l_1..u_1\subseteq l_2..u_2$.
    If $S$ is such that
    $\cxp_{l_1}^{u_1}(S,w)$ is a weak CXp for $w \in L(\mathcal{A})$ using
    $\lwa_{l_1}^{u_1}$
    then $\cxp_{l_2}^{u_2}(S, w)$ is a weak CXp using $\lwa_{l_2}^{u_2}$.
\end{proposition}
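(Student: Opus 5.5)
The argument mirrors the proof of Proposition~\ref{rel-axp}, with the inclusion running the other way. The key observation is monotonicity of the explanation languages in the replacement range. Since $l_1..u_1 \subseteq l_2..u_2$, every string whose length lies in $l_1..u_1$ also has length in $l_2..u_2$. Hence $L(\Sigma_{l_1}^{u_1}) \subseteq L(\Sigma_{l_2}^{u_2})$.

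I would first establish, for a fixed set $S \subseteq \{1,\ldots,|w|\}$, that
\[
\cxp_{l_1}^{u_1}(S,w) \subseteq \cxp_{l_2}^{u_2}(S,w).
\]
Both languages come from the same regular expression $s_1\cdots s_n$. Positions $i \notin S$ keep the fixed symbol $w[i]$. Positions $i \in S$ hold $\Sigma_{l_1}^{u_1}$ in the first language and $\Sigma_{l_2}^{u_2}$ in the second. Concatenation is monotone in each argument: if $L(R_i)\subseteq L(R_i')$ for all $i$, then $L(R_1\cdots R_n) \subseteq L(R_1'\cdots R_n')$. So any word of the first language splits as $x_1\cdots x_n$, with $x_i = w[i]$ for $i\notin S$ and $x_i \in L(\Sigma_{l_1}^{u_1}) \subseteq L(\Sigma_{l_2}^{u_2})$ for $i \in S$. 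The same split witnesses membership in the second language.

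Next I would verify the two conditions of a weak CXp for $\cxp_{l_2}^{u_2}(S,w)$.

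First, $w$ belongs to it. This holds because $l_2 \le 1 \le u_2$, as the paper notes for all languages in $\lwa_l^u$; alternatively it follows from $w \in \cxp_{l_1}^{u_1}(S,w)$ and the inclusion above.

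Second, $\cxp_{l_2}^{u_2}(S,w) \not\subseteq L(\mathcal{A})$. By hypothesis $\cxp_{l_1}^{u_1}(S,w)$ is a weak CXp, so there is a word $w' \in \cxp_{l_1}^{u_1}(S,w)$ with $w' \notin L(\mathcal{A})$. By the inclusion, $w' \in \cxp_{l_2}^{u_2}(S,w)$, and this gives the required counterexample.

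Finally, $\cxp_{l_2}^{u_2}(S,w)$ lies in $\lwa_{l_2}^{u_2}$ by definition of that class, so it is a weak CXp using $\lwa_{l_2}^{u_2}$.

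The only step needing care is the monotonicity claim, specifically the case $u = \infty$. There $\Sigma_l^\infty$ is defined as $l$ copies of $\Sigma$ followed by $\Sigma^*$. One should check directly that $L(\Sigma_l^u)$ is exactly the set of strings with length in $l..u$, so that range inclusion gives language inclusion. This follows straight from the definitions, so I expect no real obstacle.
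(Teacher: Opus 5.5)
Your proof is correct and takes essentially the same route as the paper, which argues directly on the counterexample $w'$: each replacement of length in $l_1..u_1$ is also admissible under $l_2..u_2$, which is the witness-level form of your inclusion $\cxp_{l_1}^{u_1}(S,w) \subseteq \cxp_{l_2}^{u_2}(S,w)$. Your closing claim that $L(\Sigma_l^u)$ is exactly the set of strings with length in $l..u$ holds for the definition the paper states in words, namely reading each factor $(\Sigma|\emptyset)$ as $(\Sigma|\epsilon)$; taken literally, $\Sigma|\emptyset$ denotes just $\Sigma$, so the claim does not follow ``straight from the definitions'' as written.
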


\begin{proof}
    By the definition of a CXp, $\cxp_{l_1}^{u_1}(S, w)\not\subseteq L(\mathcal{A})$,
    there exists a word $w'\in \cxp_{l_1}^{u_1}(S, w)$, such that
    $w'\not\in L(\mathcal{A})$.
    Since $w'\in \cxp_{l_1}^{u_1}(S, w)$, each position $i$ in $S$
    is replaced in $w'$ by a string of length between $l_1$ and $u_1$.
    As $(l_1,u_1)\subseteq (l_2,u_2)$, the replacementsare also valid
    under bound $(l_2,u_2)$, to have $\cxp_{l_2}^{u_2}(S, w)$. Thus,
    $\cxp_{l_2}^{u_2}(S, w)$ is a CXp, since it contains $w$ and
    a counterexample $w'\not\in L(\mathcal{A})$.
\end{proof}

\begin{proposition} \label{prop:edfa_appendix}
  Given an alphabet $\Sigma$ and a parameter $n\in\mathbb{N}$, there
  exists a family of regular languages $L_n$ over $\Sigma$ such that
  any DFA $\mathcal{A}_n$ for $L_n$ has least $2^n$ states while $\forall
  w\in\Sigma^*$ for any $\axp_1^1(S, w)$ and any $\cxp_1^1(S, w)$ it
  holds that $|S|\leq k$, for some constant $k\in\mathbb{N}$, $k<n$.
\end{proposition}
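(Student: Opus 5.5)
The plan is to use the classical witness of exponential DFA blow-up: fix two distinct symbols $\texttt{a},\texttt{b}\in\Sigma$ (we need $|\Sigma|\geq 2$) and let $L_n = L(\Sigma^*\texttt{a}\Sigma^{n-1})$, the words whose $n$-th symbol from the end is $\texttt{a}$. The regular expression has size $O(n)$. We will show that every DFA for $L_n$ has at least $2^n$ states, and that every AXp and every CXp in $\lwa_1^1$ has at most one position. This gives $k=1<n$ for $n\geq 2$.

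\textbf{Lower bound.} We use a Myhill--Nerode argument. Take two distinct words $u,v\in\{\texttt{a},\texttt{b}\}^n$, and let $i$ be a position where they differ, say $u[i]=\texttt{a}$ and $v[i]=\texttt{b}$. Consider the suffix $z=\texttt{b}^{i-1}$. In $uz$ and $vz$, the $n$-th symbol from the end is the one at position $i$. Hence $uz\in L_n$ and $vz\notin L_n$. So the $2^n$ words are pairwise inequivalent, and any DFA $\mathcal{A}_n$ must reach pairwise distinct states on them. It therefore has at least $2^n$ states.

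\textbf{Explanation size.} Fix $w$ with $|w|=m$. Every language in $\lwa_1^1$ contains only words of length $m$, so we only need to reason about length-$m$ words.
\begin{itemize}
\item \emph{Case $m<n$.} Then $w\notin L_n$, and every word of length $m$ is rejected. We explain $w\in L(\overline{\mathcal{A}_n})$ as the paper prescribes. $\axp_1^1(\emptyset,w)=\Sigma^m\subseteq\overline{L_n}$, so the empty set is the unique AXp. No weak CXp exists, so there are no CXps. All sizes are $0$.
\item \emph{Case $m\geq n$.} Let $p=m-n+1$. Membership of any length-$m$ word in $L_n$ depends only on whether its symbol at position $p$ is $\texttt{a}$. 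First, $\axp_1^1(\{p\},w)$ is a weak AXp, for $L_n$ if $w[p]=\texttt{a}$ and for $\overline{L_n}$ otherwise. Second, any weak AXp $S$ must contain $p$: if $p\notin S$, changing $w[p]$ to the other symbol ($\texttt{b}$, respectively $\texttt{a}$) yields a word in the language with the opposite classification. Hence $\{p\}$ is the unique AXp.
\end{itemize}
For CXps in the case $m\geq n$, we can either invoke Proposition~\ref{thm:duality}: the CXps are the minimal hitting sets of $\{\{p\}\}$, i.e.\ just $\{p\}$. Or we can check directly: $\cxp_1^1(\{p\},w)$ contains a counterexample, and every $Y$ with $p\notin Y$ contains none.

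\textbf{Main obstacle.} The only delicate points are bookkeeping ones. One is handling rejected words, which we do via the complement as the paper prescribes. The other is the degenerate short-word case, where the empty AXp and the absence of CXps occur. Both are harmless for the bound $|S|\leq 1$.
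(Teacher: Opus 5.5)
Your proof is correct and takes the same approach as the paper: the same witness language $\Sigma^*\texttt{a}\Sigma^{n-1}$, and the same observation that position $|w|-n+1$ alone determines membership, so $\{|w|-n+1\}$ is the unique AXp and the unique CXp. You go somewhat beyond the paper: you prove the $2^n$ lower bound via Myhill--Nerode instead of citing it, you treat words with $|w|<n$ (where the paper's position $|w|-n+1$ is undefined) and rejected words via the complement, and you note that $k=1<n$ requires $n\geq 2$.
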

\begin{proof}
  Let $\Sigma=\{\texttt{a},\texttt{b}\}$ and
  $R_n=\texttt{(a|b)}^*\texttt{a(a|b)}^{n-1}$ defining
  the language of strings over $\Sigma$ whose $n^{\text{th}}$ symbol
  from the end is \texttt{a}.
  It is well-known that any DFA $\mathcal{A}_n$ for $L_n$ has at least $2^n$
  states~\cite{Dragon_book,sipser-book97}.
  Now, consider any string $w\in \Sigma^*$.
  Observe that $w\in L_n$ if and only if the $n^{\text{th}}$ character
  from the end of $w$ is \texttt{a}.
  Hence, a sole AXp using $\lwa_1^1$ for this fact has size 1; namely,
  it is $\axp_1^1(\{|w|-n+1\}, w)$.
  (In other words, as long as we keep the symbol at position $|w|-n+1$
  unchanged, the outcome of the DFA $\mathcal{A}_n$ will remain the same.)
  Similarly, changing this particular character is the only way to
  change the result of the DFA, i.e., $\cxp_1^1(\{|w|-n+1\},w)$ is the
  only CXp for~$w$.
\end{proof}

\begin{proposition} \label{prop:ed_appendix}
  Given an alphabet $\Sigma$ and a parameter $n\in\mathbb{N}$, there
  exists a family of regular languages $L_n$ over $\Sigma$ and a word
  $w\not\in L_n$ with the minimum edit distance of at least $n$ such
  that for any AXp $\axp_1^1(S,w)$ it holds that $|S|\leq k$, for some
  constant $k\in\mathbb{N}$, $k<n$.
\end{proposition}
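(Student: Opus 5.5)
We construct, for each $n$, an explicit language $L_n$ and rejected word $w$ whose rejection is forced by a constant number of positions, while any accepted word requires at least $n$ substitutions. The natural choice is over $\Sigma=\{\texttt{a},\texttt{b}\}$ with $L_n = L(\texttt{a}^*)$, or, to make the point about explanations sharper, $L_n = L(\texttt{a}^{n}\texttt{a}^*)$. Take $w=\texttt{b}^n$.

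Since the paper defines minimum edit distance as the smallest number of symbol \emph{replacements} turning $w$ into a word of $L_n$, the distance argument is immediate. Every word of $L_n$ of length $n$ is $\texttt{a}^n$, which differs from $w=\texttt{b}^n$ in all $n$ positions. So the minimum edit distance is exactly $n\ge n$. If insertions and deletions were also counted, a deletion-based shortcut would have to be excluded. The $\texttt{a}^n\texttt{a}^*$ variant does not handle this case by itself. Since the paper's definition is substitution-only, we stick with it and remark that the claim is about replacements.

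For the AXps we explain $w\notin L_n$, i.e.\ $w\in L(\overline{\mathcal{A}_n})$, in $\lwa_1^1$. The claim is that $\axp_1^1(\{1\},w)=L(\texttt{b}\any^{n-1})$ is a weak AXp: every word of length $n$ starting with $\texttt{b}$ is not in $\texttt{a}^*$, hence rejected. It is minimal because dropping position $1$ gives $\any^n\ni\texttt{a}^n\in L_n$. By the same argument each $\{i\}$ is an AXp. We then show these singletons are the only AXps. Any $S$ with $\axp_1^1(S,w)\subseteq \overline{L_n}$ must be nonempty, since otherwise $\texttt{a}^n$ belongs to the language. A nonempty $S$ contains some $\{i\}$, which is already a weak AXp, so minimality forces $S=\{i\}$. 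Thus every AXp has $|S|=1\le k$ with $k=1<n$, valid for $n\ge 2$; for $n=1$ the constraint $k<n$ cannot hold, so we take $n\ge2$.

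The only delicate step is confirming that the definition of minimum edit distance matches the substitution-only reading. Once that is settled, the rest is a direct verification of inclusion in the complement language.
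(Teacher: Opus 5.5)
Your proof is correct, but it uses a different witness from the paper's. The paper takes $L_n=L(\texttt{(a|b)}^*\texttt{a}^n\texttt{(a|b)}^*)$ and $w=\texttt{b}^{2n}$. There the distance is $n$ because some block of $n$ consecutive positions must become $\texttt{a}$'s. A set $S$ is a weak AXp iff it meets every length-$n$ window of $\{1,\dots,2n\}$, so the AXps are the pairs $\{i,j\}$ with $i\le n<j\le i+n$, giving $k=2$ and requiring $n\ge 3$.

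You take $L_n\subseteq L(\texttt{a}^*)$ and $w=\texttt{b}^n$. The distance is $n$ because $\texttt{a}^n$ is the only word of $L_n$ of length $n$. Every nonempty $S$ is a weak AXp, so the AXps are exactly the singletons, $k=1$, and $n\ge 2$ suffices.

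Your route is more elementary and fully airtight, since the complete list of AXps follows in one line. The paper's claim that \emph{every} AXp has size $2$ instead needs an unstated argument that no three positions form a minimal hitting set of the windows. That argument holds for length $2n$ but already fails for longer words: for $n=3$ and $w=\texttt{b}^9$, the set $\{2,4,6,8\}$ is an AXp of size $4$, contrary to the paper's closing remark about $\texttt{b}^{kn}$.

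What the paper's example buys is persuasiveness. Its $L_n$ is a genuine pattern-matching language with many words of length $|w|$ and many ways to repair $w$. So the gap between edit distance and AXp size is not an artefact of $L_n$ containing a single word of that length, and the AXps show nontrivial structure.

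Finally, your caveat about insertions and deletions is unnecessary. Since $w=\texttt{b}^n$ shares no symbol with any word of $\texttt{a}^*$, each of its $n$ symbols must be substituted or deleted. Hence the Levenshtein distance is also $n$ for both of your variants. The paper's example is likewise robust, since $n$ symbols $\texttt{a}$ must be created.
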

\begin{proof}
  Let $\Sigma=\{\texttt{a},\texttt{b}\}$ and
  $R_n=\texttt{(a|b)}^*\texttt{a}^{n}\texttt{(a|b)}^*$ defining
  the language of strings containing $n$ consecutive symbols \texttt{a}'s.
  Consider a string $w=\texttt{bb}\ldots\texttt{b}$ of size $m=~2n$.
  Clearly, $w\not\in L_n$, and the minimum edit distance from $w$ to
  $L_n$ is $n$, since at least $n$ consecutive
  characters in $w$ must be changed to \texttt{a}'s
  (starting from any suitable position)
  to obtain a word $w'\in L_n$.
  Observe that a possible AXp for $w$ using $\lwa_1^1$ is
  $\axp_1^1(\{1,n\},w)$.
  Indeed, as long as the two symbols at the beginning and the
  middle of $w$ are both \texttt{b}, there is \emph{no way} to obtain a word
  $w'$ with $n$ consecutive \texttt{a}'s.
  In fact, the complete set of AXps for $w$ is $\axp_1^1(\{i,j\},w)$
  for any $i,j\in\mathbb{N}$ such that $i<n$ and $n\le j<i+n$.
  Each such AXp has size 2.
  Similar reasoning applies more generally: when $w=\texttt{bb\ldots b}$ has
  length $m=kn$, all the AXps of $w$ are of size $k$.
\end{proof}

\begin{proposition}\label{thm:axp_appendix}
  Given a DFA $\mathcal{A}$ with $m$ states and a word $w$ s.t. $\lvert
  w\rvert=n$, computing an AXp for $w \in L(\mathcal{A})$ in languages
  $\lwa_1^1$, $\lwa_{1}^\infty$, or $\lwa_{0}^\infty$ can be done in
  $O(\lvert\Sigma\rvert mn^2)$ time.
\end{proposition}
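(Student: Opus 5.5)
The plan is to run Algorithm~\ref{alg:extract_axp} (\textsc{ExtractAXp}) starting from the trivial weak AXp $X=\{1,\ldots,n\}$, since $L(w)\subseteq L(\mathcal{A})$ holds because $w\in L(\mathcal{A})$. The loop performs exactly $n$ inclusion tests $\axp_l^u(X\setminus\{i\},w)\subseteq L(\mathcal{A})$. The result has three ingredients: correctness (the output is an AXp), and a cost of $O(|\Sigma| m n)$ per inclusion test, which gives $O(|\Sigma| m n^2)$ overall.

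\textbf{Correctness.} The loop preserves the invariant $\axp_l^u(X,w)\subseteq L(\mathcal{A})$. For minimality, suppose an index $i$ was kept. Then, at the moment it was tested, the current set $X_i\supseteq X_{\mathrm{final}}$ satisfied $\axp_l^u(X_i\setminus\{i\},w)\not\subseteq L(\mathcal{A})$. Fewer retained positions give a larger language, so $X_{\mathrm{final}}\setminus\{i\}\subseteq X_i\setminus\{i\}$ implies $\axp_l^u(X_{\mathrm{final}}\setminus\{i\},w)\supseteq \axp_l^u(X_i\setminus\{i\},w)$, which is also not included in $L(\mathcal{A})$. This monotonicity is the standard deletion-based argument. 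Since the explanation languages in $\lwa_l^u$ are ordered by inclusion exactly as the retained index sets are ordered by reverse inclusion, no weak AXp in $\lwa_l^u$ strictly contains the output. Hence the output is an AXp.

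\textbf{Cost of one inclusion test.} Given $S$, I will build an NFA $W$ for $\axp_l^u(S,w)$ with states $0,\ldots,n$, reading from left to right:
\begin{itemize}
\item if $i\in S$, add a transition $i-1\xrightarrow{w[i]} i$;
\item if $i\notin S$ and $l=u=1$, add $i-1\xrightarrow{c} i$ for every $c\in\Sigma$;
\item if $i\notin S$ and $(l,u)=(1,\infty)$, add $i-1\xrightarrow{c} i$ and a self-loop $i\xrightarrow{c} i$ for all $c$;
\item if $i\notin S$ and $(l,u)=(0,\infty)$, use the same transitions as in the $(1,\infty)$ case plus an $\epsilon$-move $i-1\to i$.
\end{itemize}
In the last case I will eliminate the $\epsilon$-moves by placing the self-loop on state $i-1$ and merging across consecutive freed positions; since every $\epsilon$-move goes forward, this stays linear-size. $W$ has $O(n)$ states and $O(|\Sigma| n)$ transitions.

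I then decide $L(W)\subseteq L(\mathcal{A})$ by testing whether $L(W)\cap L(\overline{\mathcal{A}})=\emptyset$. The complement $\overline{\mathcal{A}}$ is a DFA with at most $m+1$ states, obtained by completing $\mathcal{A}$ and flipping the accepting states. The check is a BFS/DFS over the product graph, which has $O(nm)$ nodes. Each node $(j,q)$ has at most $|\Sigma|$ outgoing letters from $W$ (the forward edge, plus at most $|\Sigma|$ self-loop edges), each with a unique successor in the DFA. This gives $O(|\Sigma| nm)$ edges, so reachability of a pair (state $n$, accepting state of $\overline{\mathcal{A}}$) is decided in $O(|\Sigma| mn)$ time. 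Rebuilding $W$ costs only $O(|\Sigma| n)$. Multiplying by $n$ iterations gives $O(|\Sigma| mn^2)$.

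\textbf{Main obstacle.} The main step to get right is that the subset check is linear in the product rather than requiring determinisation of $W$. This is precisely why inclusion is phrased as emptiness of the intersection with the complement DFA of $\mathcal{A}$: determinism of $\mathcal{A}$ alone makes the complement cheap, and nondeterminism in $W$ is harmless for product reachability. A secondary point is handling the $\epsilon$-replacements in $\lwa_0^\infty$ without blowing up the number of transitions; I would handle this either as above or simply by allowing $\epsilon$-edges in the product search, which adds only $O(nm)$ edges.
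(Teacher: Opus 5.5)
Your plan matches the paper's proof. You run \textsc{ExtractAXp} from $\{1,\ldots,n\}$, argue minimality by deletion, and charge $O(|\Sigma|mn)$ to each of the $n$ inclusion tests. Your account of the inclusion test is more careful than the paper's. You build an NFA for $\axp_l^u(S,w)$, take the product with the completed complement of $\mathcal{A}$, and check reachability. The paper instead calls the candidate automaton $\mathcal{A}_w$ a DFA, even though $\Sigma^+$ and $\Sigma^*$ segments make it nondeterministic. As you note, determinism of $\mathcal{A}$ alone keeps the check linear in the product, so the complexity bound is sound.

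There is one false step in your correctness paragraph. You claim that languages in $\lwa_l^u$ are ordered by inclusion exactly as retained index sets are ordered by reverse inclusion. This fails for $\lwa_0^\infty$, because fixed characters can ``slide'' when other positions are replaced by the empty word.

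Take $\Sigma=\{\texttt{a},\texttt{b}\}$, $L(\mathcal{A})=L(\Sigma^*\texttt{a}\Sigma^*)$ and $w=\texttt{aaa}$. Then
$\axp_0^\infty(\{3\},w)=L(\Sigma^*\texttt{a})\subsetneq L(\Sigma^*\texttt{a}\Sigma^*)=\axp_0^\infty(\{2\},w)$,
even though $\{2\}\not\subseteq\{3\}$. Moreover, \textsc{ExtractAXp} scanning positions $1,2,3$ returns exactly $\{3\}$. So under the literal language-maximality definition, your sentence ``no weak AXp in $\lwa_l^u$ strictly contains the output'' is false for $\lwa_0^\infty$.

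What your deletion argument establishes, like the paper's, is subset-minimality of the retained positions. That is the notion the paper actually works with through its $\axp_l^u(S,w)$ shorthand (``AXps are minimal subsets of retained character positions''). You should state the conclusion in those terms.

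The order correspondence you invoke does hold for $\lwa_1^1$ and $\lwa_1^\infty$ when $|\Sigma|\ge 2$:
\begin{itemize}
\item For $\lwa_1^1$, any position free in $S$ but fixed in $S'$ can be flipped to a different symbol, giving a word in one language but not the other.
\item For $\lwa_1^\infty$, the length-$n$ words of $\axp_1^\infty(S,w)$ are exactly $\axp_1^1(S,w)$, which reduces it to the previous case.
\end{itemize}
You may keep the extra claim only for those two languages.
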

\begin{proof}
  The call \textsc{ExtractAXp}$(\{1,\ldots,|w|\},\mathcal{A},w,l,u)$ always
  generates a minimal AXp of $w \in L(\mathcal{A})$ in language $\lwa_l^u$.
  This follows since $\axp_l^u(\{1,\ldots,|w|\}, w)$ is just a weak
  AXp $L(w)$, and the resulting set $X$ is minimal by construction,
  since adding back any removed position results in a weak AXp, which
  is strictly larger than the AXp without the position.
  To extract a minimal AXp for $w \in L(\mathcal{A})$, the \textsc{ExtractAXp}
  procedure examines exactly $n$ possible AXp candidates to find a
  minimal one.
  A candidate $\axp_l^u(X,w)$ is kept as a DFA $\mathcal{A}_w$ recognising the
  corresponding language, which is initially set to recognise $L(w)$
  (and so $\mathcal{A}_w$ initially has $n+1$ states) and it is updated
  \emph{incrementally} at each iteration.
  Assuming the language $\mathbb{W}_1^1$, each iteration of
  Algorithm~\ref{alg:extract_axp} involves the following operations:
  \begin{enumerate}[noitemsep]
    \item replacing a $w[i]$-transition in $\mathcal{A}_w$ with a
      $\Sigma$-transition, in $O(1)$ time;
    \item replacing a $\Sigma$-transition in $\mathcal{A}_w$ with a
      $w[i]$-transition, in $O(1)$ time (if the check determines that
      position $i$ must be kept);
    \item checking whether $\axp_l^u(X\setminus\{i\},w)\triangleq
      L(\mathcal{A}_w)\subseteq L(\mathcal{A})$, in $O(|\Sigma|mn)$
      time because $|\mathcal{A}|=m$
      and $|\mathcal{A}_w|=n+1$ (the above operations \emph{do not add} new
      states in $\mathcal{A}_w$).
  \end{enumerate}
  %
  %
  Hence, the overall complexity of the algorithm is $O(|\Sigma|mn^2)$
  for the language $\lwa_1^1$.\footnote{If we consider a more general
  language $\lwa_1^\infty$ ($\lwa_0^\infty$, resp.) then replacing a
  $w[i]$-transition in the corresponding DFA $A_w$ with a
  $\Sigma_1^\infty$-transition ($\Sigma_0^\infty$-transition, resp.)
  adds a single new state to $A_w$ in the worst case. This way, $A_w$
  still has $O(n)$ states at each iteration of the algorithm.
  Therefore, the complexity result holds.}
\end{proof}

\begin{proposition}
  Computing a single AXp using $\lwa_0^\infty$ for $w \in L(R)$ given
  a regular expression $R$ is PSPACE-hard.
\end{proposition}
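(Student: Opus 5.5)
We will reduce from regular-expression universality, which is PSPACE-complete: given $R'$ over $\Sigma$, decide whether $L(R')=\Sigma^*$. We exhibit a polynomial-time reduction that turns any procedure returning a single AXp in $\lwa_0^\infty$ into a universality decider. Choose a fresh symbol $\#\notin\Sigma$, set $\Sigma'=\Sigma\cup\{\#\}$, and take the word $w=\#$ of length one. Let
\begin{equation*}
R \;=\; \# \;\big|\; R'' ,
\end{equation*}
where $R''$ is a regular expression of size polynomial in $|R'|$ with $L(R'')=L(R')\cup\Sigma'^*\#\Sigma'^*\cup\Sigma'^*\#\Sigma'^*\#\Sigma'^*\cup\cdots$. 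Concretely, we intend $L(R'') = L(R')\cup \Sigma'^*\#\Sigma'^*$, written with $\Sigma'$ expanded as a union of letters. Then $w=\#\in L(R)$.

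Since $|w|=1$, there are only two candidates in $\lwa_0^\infty$: $\axp_0^\infty(\{1\},w)=L(\#)$ and $\axp_0^\infty(\emptyset,w)=L(\Sigma'^*)$. The first is always a weak AXp. The second is a weak AXp iff $\Sigma'^*\subseteq L(R)$. Words containing $\#$ are always in $L(R)$, so this holds iff every $\#$-free word lies in $L(R')$, i.e. iff $L(R')=\Sigma^*$.

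Consequently the unique (minimal) AXp is $\emptyset$ exactly when $R'$ is universal, and $\{1\}$ otherwise. Any algorithm outputting a single AXp therefore decides universality. The reduction is polynomial, so computing one AXp is PSPACE-hard.

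The main point requiring care is ensuring $R$ has polynomial size and that the added $\#$-part cannot interfere. Both look routine, since $\Sigma'^*\#\Sigma'^*$ is a fixed small expression. One could even drop the fresh symbol by encoding more carefully, but the fresh symbol keeps the argument clean. This step also explains the contrast with Proposition~\ref{thm:axp}: for a DFA the inclusion check is polynomial, whereas here it reduces to regex inclusion, which is PSPACE-complete.
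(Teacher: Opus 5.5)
Your proof is correct and follows the paper's argument. The key fact is that for nonempty $w$ the language $\axp_0^\infty(\emptyset,w)$ is all of $\Sigma^*$, so $\emptyset$ is the unique AXp exactly when the regular language is universal, and any AXp procedure therefore decides the PSPACE-complete universality problem. Your fresh-symbol construction with $w=\#$ only makes explicit a detail the paper leaves implicit: how to supply a nonempty word $w\in L(R)$ to the AXp procedure.
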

\begin{proof}
  Note that $\emptyset$ defines an $\axp_0^\infty(\emptyset, w)$ for
  $w \in L(R)$ iff $R$ is a \emph{universal language}.
  If it is the case then $\axp_0^\infty(\emptyset, w)$ is
  \emph{unique}, due to minimality.
  An algorithm for determining an AXp for $w \in L(R)$ in
  $\lwa_0^\infty$ thus gives a check that $L(R)$ is universal, which
  is PSPACE-complete~\cite{universal}.
\end{proof}

\section{Exponential Size of Explanations}

Each type of explanation AXp/CXp may be exponential in the other
type of explanation.
Whenever one
type of explanation AXp/CXp is a collection of disjoint
sets, the collection of minimal hitting sets (dual
explanation) is exponential.

\begin{proposition} \label{thm:exponential_appendix}
    Given an finite automaton $\mathcal{A}$, the number of AXps/CXps
    for a word $w \in L(\mathcal{A})$, computed using $\lwa_1^1$,
    may be exponential in both the
    size of the $\mathcal{A}$ and the number of its dual explanations.
\end{proposition}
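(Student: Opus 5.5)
We construct an explicit family of automata and words in which one type of explanation consists of $k$ pairwise disjoint sets of size $2$. By the hitting set duality of Proposition~\ref{thm:duality}, the dual explanations are then all $2^k$ transversals, while the automaton has only $O(k)$ states.

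Take $\Sigma=\{\texttt{a},\texttt{b}\}$ and the word $w=(\texttt{ab})^k$ of length $n=2k$. Let $L_k$ be the set of words $u$ of length $2k$ such that for every block $j\in\{1,\ldots,k\}$, at least one of $u[2j-1]=\texttt{a}$ or $u[2j]=\texttt{b}$ holds. Then $w\in L_k$. A DFA for $L_k$ reads the word block by block. It needs a state for ``block start,'' a state for ``first symbol of the block was wrong, so the second must be \texttt{b},'' a state for ``first symbol was fine, so the second symbol is free,'' and a sink. Repeating this for $k$ blocks and accepting only after exactly $2k$ symbols gives $O(k)$ states, so $|\mathcal{A}|=\Theta(k)$.

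\textbf{CXps.} $\cxp_1^1(Y,w)\not\subseteq L_k$ holds iff some block has both of its positions freed, since only then can the block be made $\texttt{ba}$. Freeing positions in other blocks does not matter, because the length stays $2k$. Hence the minimal CXps are exactly the $k$ disjoint pairs $\{2j-1,2j\}$.

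\textbf{AXps.} $\axp_1^1(X,w)\subseteq L_k$ holds iff $X$ fixes at least one position in every block. If some block had no fixed position, it could be set to $\texttt{ba}$, giving a word outside $L_k$. Hence the minimal AXps are exactly the sets choosing one position from each block, and there are $2^k$ of them. This also follows directly from Proposition~\ref{thm:duality}, since the minimal hitting sets of $k$ disjoint pairs are precisely these transversals.

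So the number of AXps is $2^k$, which is exponential both in $|\mathcal{A}|=O(k)$ and in the number $k$ of CXps. For the symmetric claim, that the CXps can be exponential in the AXps, use the roles-swapped language. Let $L'_k$ be the set of length-$2k$ words $u$ such that in \emph{some} block $j$ both $u[2j-1]=\texttt{a}$ and $u[2j]=\texttt{b}$ hold. Again $w\in L'_k$. Now the AXps are the disjoint pairs $\{2j-1,2j\}$, and the CXps are the $2^k$ transversals that destroy every block. The DFA tracks position modulo $2$, a flag ``some complete block already seen,'' and whether the current block's first symbol was \texttt{a}, with a length counter up to $2k$, so it again has $O(k)$ states.

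The main obstacle is carefully verifying the characterisation of weak AXps and weak CXps, in particular that the fixed length $2k$ of $\lwa_1^1$ does not create extra counterexamples. It also requires confirming the $O(k)$ state bound. The DFA must count positions to enforce length exactly $2k$, and that counter already gives $O(k)$ states, with only constant extra information per position.
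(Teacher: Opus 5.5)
Your proof is correct and follows the paper's route. Like the paper, you build a language in which one type of explanation consists of $k$ pairwise disjoint position pairs, so that by the hitting set duality of Proposition~\ref{thm:duality} the dual explanations are the $2^k$ transversals, once with an ``every block'' condition and once with a ``some block'' condition. The differences are cosmetic: the paper uses an upper/lowercase alphabet, an NFA of size $O(m^2)$ and a rejected word for one direction, while your binary-alphabet DFAs with $O(k)$ states, keeping $w \in L(\mathcal{A})$ in both directions, are if anything a tidier instantiation.
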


\begin{proof}
  Let $\mathcal{A}$ be an FA over the alphabet $\Sigma =
  \{\texttt{a},\texttt{b}, \dots , \texttt{z}\}\cup
  \{\texttt{A},\texttt{B}, \dots , \texttt{Z}\}$, where $\lvert
  \any\rvert = n$.
  The language $L(\mathcal{A})$ recognised by $\mathcal{A}$ is defined as:

  \begin{equation*}
      L(\mathcal{A})=\texttt{ab}\any^{\frac{n}{2}-2}\cup \any^{2}
      \texttt{cd}\any^{\frac{n}{2}-4}\cup \any^{4}
      \texttt{ef}\any^{\frac{n}{2}-6} \cup ... \cup
      \any^{\frac{n}{2}-2} \texttt{yz}
  \end{equation*}
  Each term in the union represents a language where a fixed
  adjacent pair of symbols are lowercase and appears at specific
  positions while the remaining positions can be any symbol from
  $\Sigma$.

  If we consider the word $w$ with all symbols in uppercase defined
  as: $w=\texttt{ABCD\dots YZ}$,  with $\lvert w \rvert = \lfloor
  \frac{n}{2} \rfloor$ then $w\not\in L(\mathcal{A})$ because the considered
  word does not match any fixed adjacent pair of lowercase symbols in
  $L(\mathcal{A})$.
  The set of minimal CXps using $\lwa_1^1$ for $w$ is: $\mathbb{C}=\{ \cxp_1^1(\{1,2\},w),
  \cxp_1^1(\{3,4\},w), \ldots,$  $\cxp_1^1(\{\lfloor \frac{n}{2} \rfloor-1,
  \lfloor \frac{n}{2} \rfloor\},w) \}$.
  \begin{itemize}
      \item $\{1,2\}$ is a CXp because $w'=abCD\dots YZ$ is in $L(\mathcal{A})$ and $w\not\in L(\mathcal{A})$.
      \item Similarly $\{3,4\}$ is a CXp because $w''=ABcd\dots YZ$ is in $L(\mathcal{A})$ and $w\not\in L(\mathcal{A})$.
      \item The same reasoning holds for each pair in $\mathbb{C}$.
  \end{itemize}
  The number of CXps for $w$ is $m={\lfloor \frac{n}{2}
    \rfloor}\times \frac{1}{2}$ and all CXps are disjoint sets.
  The set of AXps is defined by the all minimal hitting sets of $\mathbb{C}$
  due to the minimal hitting set duality,
  (see Theorem~\ref{thm:duality}), it is of size $2^m$,
  exponentially larger than the number of CXps ($m$) and
  the size of $\mathcal{A}$
  (as an NFA) $O(m^2)$.

  Similarly, if we consider the word $w_2$ with all symbols in lowercase
  defined as: $w_2=\texttt{abcd\dots yz}$, then $w_2\in L(\mathcal{A})$
  because the considered word matches at least a fixed adjacent
  pair of lowercase symbols in $L(\mathcal{A})$ (matches all fixed adjacent pairs).
  The set of minimal AXps using $\lwa_1^1$ for $w_2$ is: $\mathbb{A}=~\{ \axp_1^1(\{1,2\},w_2),
  \axp_1^1(\{3,4\},w_2), \ldots,\axp_1^1(\{\lfloor\frac{n}{2}\rfloor-1,
  \lfloor \frac{n}{2} \rfloor\},w_2) \}$.
  Using the same reasoning as above, we can see that: the number of AXps
  for $w_2$ is $m={\lfloor \frac{n}{2}
  \rfloor}\times \frac{1}{2}$ and all AXps are disjoint sets,
  for this case the number of CXps ($2^m$) is exponentially larger than the number
  of AXps ($m$) and the size of $\mathcal{A}$ (as an NFA) $O(m^2)$.
\end{proof}

\section{ExtractCXp}

The extraction of a minimal CXp is outlined in
Algorithm~\ref{alg:extract_cxp}.
The input is a candidate explanation $\mathcal{Y}$ where $\cxp(\YYY)$
is a weak CXp (includes at least one counterexample string to
$L(\mathcal{A})$),
an FA $\mathcal{A}$, and an accepted word $w$.
%
%
\textsc{ExtractCxp} builds a candidate CXp automaton $W$ from $\YYY$
by trying to replace $\any$ characters in $W$ by the corresponding
character in $w$.
It then checks whether $L(W) \subseteq L(\mathcal{A})$ which will only hold if
the $L(W)$ includes no counterexamples.
If $\YYY = \{1, \ldots, |w|\}$ this means that $\Sigma^n \subseteq
L(\mathcal{A})$ and there are \emph{no CXps} for $w \in L(\mathcal{A})$.
In this case, the function returns $\bot$ indicating there is no CXp.
%
%
Otherwise, it tries replacing the $\any$ character at each position
$i$ with $w[i]$ and checks if $L(W) \subseteq L(\mathcal{A})$.
If this is the case, we must leave position $i$ free, and we record
$i$ in $\mathcal{Y}_{min}$. At the end we have constructed a minimal
CXp since replacing any $\any$ character left over will leave no
counterexample.

\begin{algorithm}[tb]
  \caption{\textsc{ExtractCXp} -- a Single CXp Extraction}
\label{alg:extract_cxp}
\textbf{Input}: Candidate set $\YYY$, automaton $\mathcal{A}$, word $w$, bounds $l, u$ \\
\textbf{Output}: Minimal $\YYY$ with $\cxp_l^u(\YYY, w) \not\subseteq L(\mathcal{A})$

\begin{algorithmic}[1]

  \IF{$\cxp_l^u(\YYY, w) \subseteq L(\mathcal{A})$}
      \STATE \textbf{return} $\bot$
  \ENDIF
  \FORALL{$i \in \YYY$} 
    \IF{$\cxp_l^u(\YYY \setminus \{i\}, w) \not\subseteq L(\mathcal{A})$}
      \STATE $\YYY \gets \YYY \setminus \{i\}$
    \ENDIF
  \ENDFOR
  \STATE \textbf{return} $\YYY$
\end{algorithmic}
\end{algorithm}

Similarly to \textsc{ExtractAXp}, procedure \textsc{ExtractCXp} has
the complexity of $O(|\Sigma|mn^2)$.

\newtheorem{corollary}{Corollary}

\begin{corollary}
    \label{thm:cxp_appendix}
    Given a DFA $\mathcal{A}$ with $m$ states and a word $w$ s.t. $\lvert
    w\rvert=~n$, finding a CXp for $w \in L(\mathcal{A})$ using $\lwa_1^1$,
    $\lwa_1^\infty$ or $\lwa_0^\infty$ can be done in
    $O(\lvert\Sigma\rvert mn^2)$ time.
\end{corollary}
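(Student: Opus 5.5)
The plan mirrors the proof of Proposition~\ref{thm:axp_appendix}, applied to Algorithm~\ref{alg:extract_cxp} with the initial call $\textsc{ExtractCXp}(\{1,\ldots,|w|\},\mathcal{A},w,l,u)$. The argument has two parts: correctness (the output is a minimal CXp, or $\bot$ exactly when no CXp exists) and the running-time bound.

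\textbf{Correctness.} $\cxp_l^u(\{1,\ldots,|w|\},w)$ frees every position. It therefore contains every language in $\lwa_l^u$, and it is a weak CXp iff some weak CXp exists at all. Hence the initial test returns $\bot$ exactly when there is no CXp.

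Otherwise, the loop keeps the invariant that $\cxp_l^u(\YYY,w)\not\subseteq L(\mathcal{A})$, i.e.\ the current $\cxp_l^u(\YYY,w)$ is a weak CXp. For minimality, suppose position $i$ remains in the final $\YYY$. When $i$ was tested, the current set $\YYY_i$ was a superset of the final $\YYY$, and $\cxp_l^u(\YYY_i\setminus\{i\},w)\subseteq L(\mathcal{A})$. The map $S\mapsto\cxp_l^u(S,w)$ is monotone in $S$, so $\cxp_l^u(\YYY\setminus\{i\},w)\subseteq L(\mathcal{A})$ as well. By the same monotonicity, every proper subset of $\YYY$ lies inside some $\YYY\setminus\{i\}$, so no proper subset yields a weak CXp. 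This is where the argument departs from the AXp case: minimality is taken over subsets rather than supersets, but the monotonicity step is the same.

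\textbf{Complexity.} The algorithm makes at most $n+1$ inclusion checks. As in Proposition~\ref{thm:axp_appendix}, the candidate language is kept as an automaton $\mathcal{A}_w$ on $n+1$ states, a chain with $w[i]$- or $\Sigma$-transitions. Each iteration toggles a single transition in $O(1)$ time. For $\lwa_1^\infty$ and $\lwa_0^\infty$, each $\Sigma^+$ or $\Sigma^*$ gadget adds at most one state, so $\mathcal{A}_w$ stays at $O(n)$ states.

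Each check $L(\mathcal{A}_w)\subseteq L(\mathcal{A})$ is done by exploring the product of $\mathcal{A}_w$ with the complete complement DFA $\overline{\mathcal{A}}$ and looking for a reachable pair of accepting states. The product has $O(mn)$ states, and each has $|\Sigma|$ outgoing symbols.

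The step needing care is this product bound when $\mathcal{A}_w$ is nondeterministic, as in the $\Sigma^*$ gadget with $\epsilon$-moves or self-loops. Plain reachability on the product of an NFA with a DFA still costs $O(|\Sigma|\cdot mn)$: we never determinize $\mathcal{A}_w$, because we only search for a witness word. The $\epsilon$-edges of the gadgets number $O(n)$, so they add at most $O(mn)$.

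Multiplying $O(n)$ checks by $O(|\Sigma|mn)$ per check gives $O(|\Sigma|mn^2)$. In other words, the corollary is Proposition~\ref{thm:axp_appendix} with fixed and freed positions swapped and the inclusion test negated.
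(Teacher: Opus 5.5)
Your proposal is correct and follows the paper's sketched proof: run \textsc{ExtractCXp} from $\{1,\ldots,|w|\}$, get minimality by construction, and reuse the complexity analysis of Proposition~\ref{thm:axp_appendix}. Your monotonicity argument, your handling of the $\bot$ case, and your NFA--DFA product reachability (which does not need $\mathcal{A}_w$ to stay deterministic under the $\any^+$/$\any^*$ gadgets) fill in details the paper leaves implicit; one caveat, shared with the paper, is that you prove minimality over sets of freed positions (the notion behind Proposition~\ref{thm:duality}), and under $\lwa_0^\infty$ this can differ from minimality of the languages themselves, e.g.\ for $w=\texttt{aaa}$, $\cxp_0^\infty(\{2\},w)=L(\texttt{a}\any^*\texttt{a})\subset L(\any^*\texttt{a}\any^*)=\cxp_0^\infty(\{1,3\},w)$ although $\{2\}\not\subseteq\{1,3\}$.
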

\begin{proof}
(Sketch) We can extract a CXp for $w \in L(\mathcal{A})$ using $\lwa_1^1$
using the call \textsc{ExtractCXp}$(1..|w|,\mathcal{A},w,l,u)$.
The correctness follows by construction, removing a position from $\mathcal{Y}$
only occurs when this maintians a CXp.
The complexity argument is essentially the same.
\end{proof}

\section{Additional Experiments}

As in the main paper, we compare the performance of different
approaches for computing all AXps: \textsc{XpEnum}($true$) uses the
\textsc{XpEnum} algorithm to find all AXps targeting AXps;
\textsc{XpEnum($true$)-Warm} as above but warm starting
with all singleton CXps; and \textsc{XpEnum}($false$) uses the
\textsc{XpEnum} algorithm to find all AXps targeting CXps.

\mysubsection{Maze using the languages $\lwa_0^\infty$ and $\lwa_1^\infty$.}
Similar to the experiments with mazes in the main paper,
here we consider the same mazes but we use the languages
$\lwa_0^\infty$ and $\lwa_1^\infty$. Note that explanations
are different for different languages. Figure \ref{fig:appendix_maze_cxps}
shows an example of a maze with a rejected path and its minimal cardinality
CXps using $\lwa_1^1$ and $\lwa_1^\infty$.
Using language $\lwa_1^1$ the CXp with minimal cardinality is $\{8, 11, 13\}$
while using $\lwa_1^\infty$ the CXp with minimal cardinality is $\{13\}$
because one replacement can introduce many symbols.

\begin{figure}[t]
  \begin{subfigure}[b]{0.325\columnwidth}
    \centering
    \includegraphics[height=4.5cm]{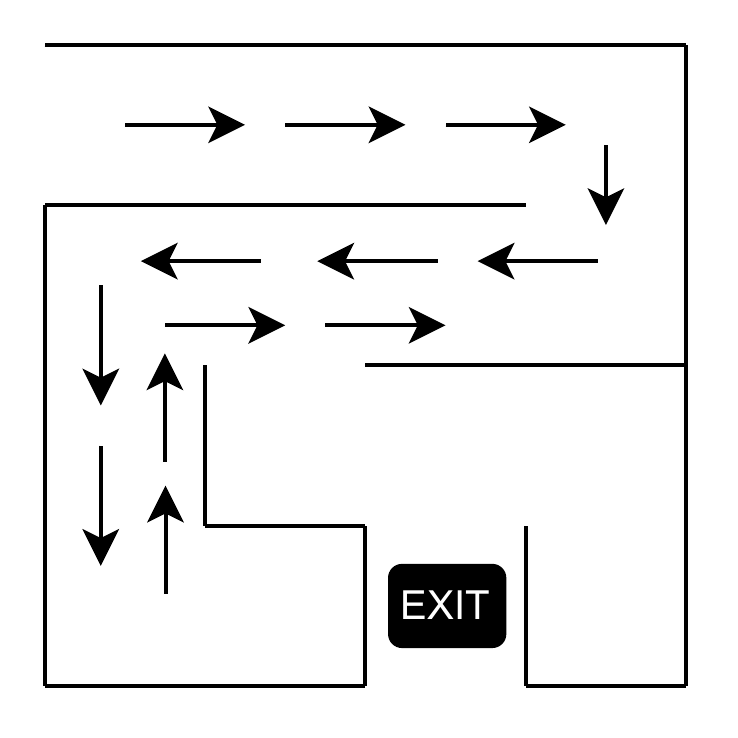}
    \caption{Example maze with a rejected path.}
    \label{fig:appendix_maze_original}
  \end{subfigure}
  \hfill
  \begin{subfigure}[b]{0.325\columnwidth}
    \centering
    \includegraphics[height=4.5cm]{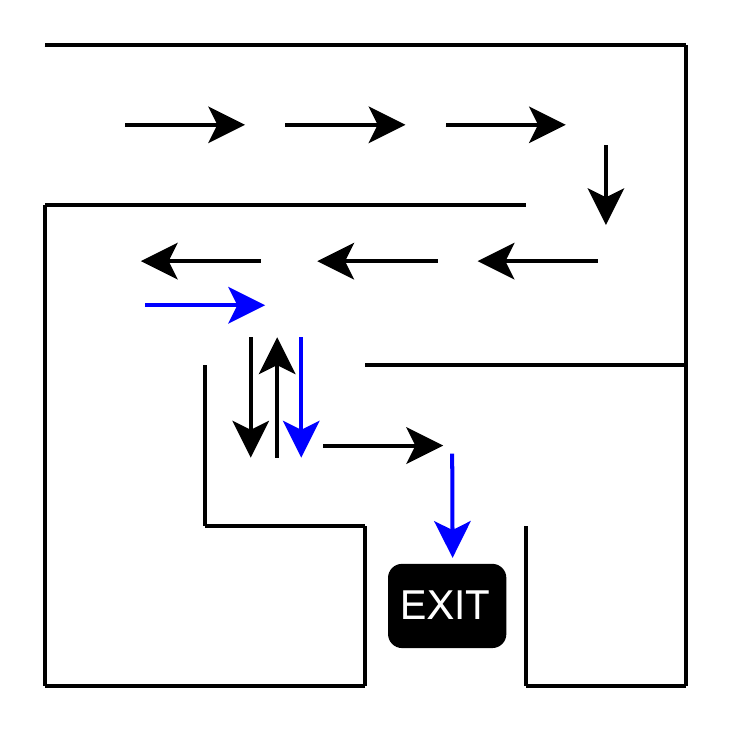}
    \caption{Minimal cardinality CXp using $\lwa_1^1$.}
    \label{fig:appendix_maze_11}
  \end{subfigure}
  \hfill
  \begin{subfigure}[b]{0.325\columnwidth}
    \centering
    \includegraphics[height=4.5cm]{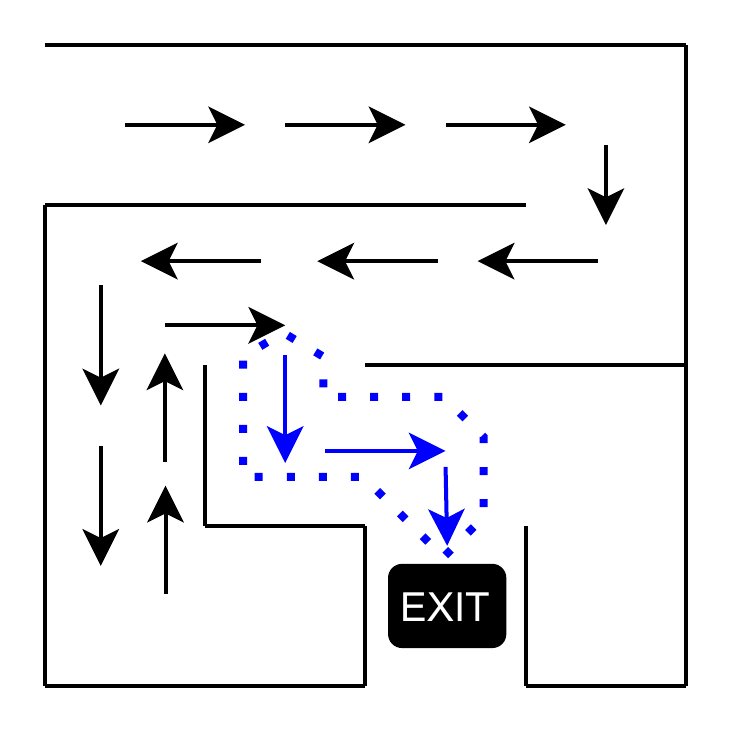}
    \caption{Minimal cardinality CXp using $\lwa_1^\infty$.}
    \label{fig:appendix_maze_infty}
  \end{subfigure}
  \caption{Maze with a rejected path and its minimal cardinality CXps using $\lwa_1^1$ and $\lwa_1^\infty$.}
  \label{fig:appendix_maze_cxps}
\end{figure}

As experiments in the main paper, to evaluate this case,
we generated mazes of different sizes, starting with
grids of size $10\times 10$ and increasing the dimension by 1 up to
$50\times 50$, getting $861$ different maze sizes.
We built mazes where each wall has 1 in
3 chance of being present. After the construction of each maze, we get a
possible solution path and introduce $\{1, 2, \dots, 5\}$ random changes in
the path, getting five different rejected paths for each maze. If the random
maze did not have any solution, we repeated the process until we found a
solvable maze for that dimension.

\begin{figure}[t]
  \begin{subfigure}[b]{0.49\columnwidth}
  \centering
  \includegraphics[width=\columnwidth]{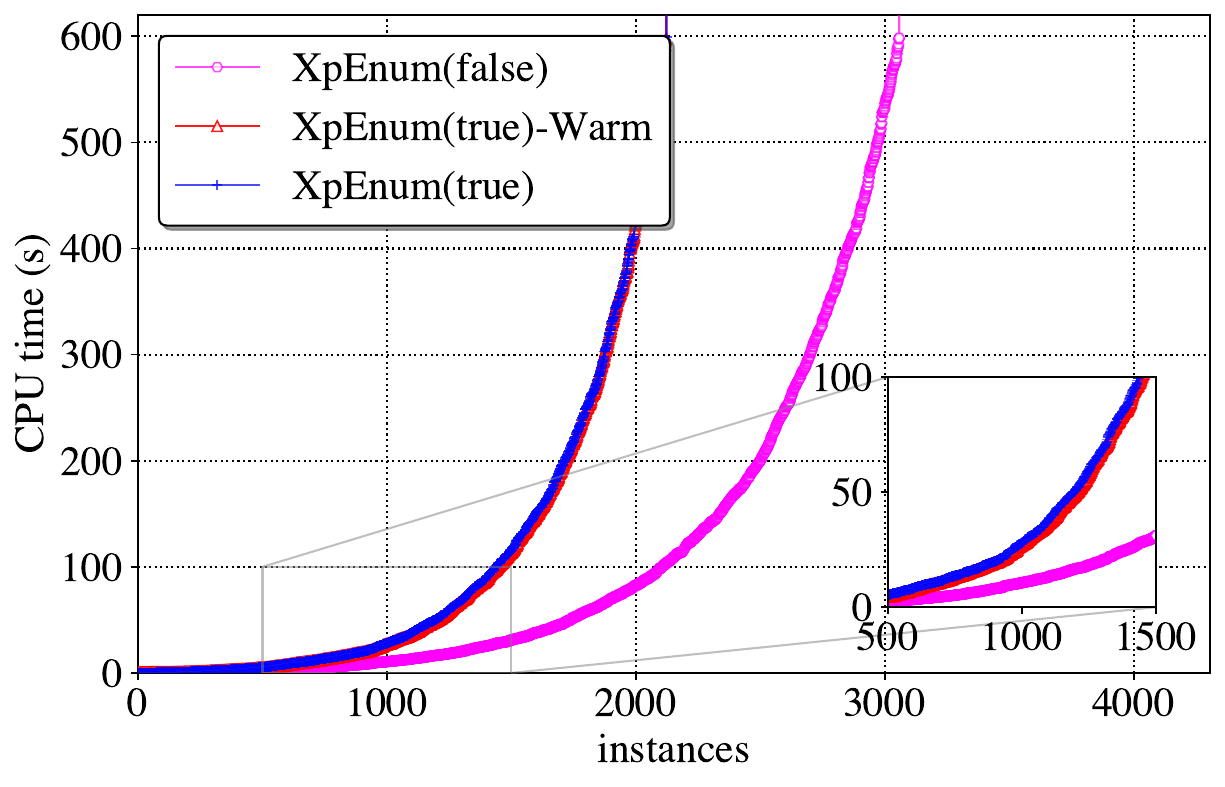}
    \caption{Enumerating all AXps using $\lwa_1^\infty$ for maze.}
    \label{fig:plot_maze_plus_all_exp}
  \end{subfigure}
  \hfill
  \begin{subfigure}[b]{0.49\columnwidth}
    \centering
    \includegraphics[width=\columnwidth]{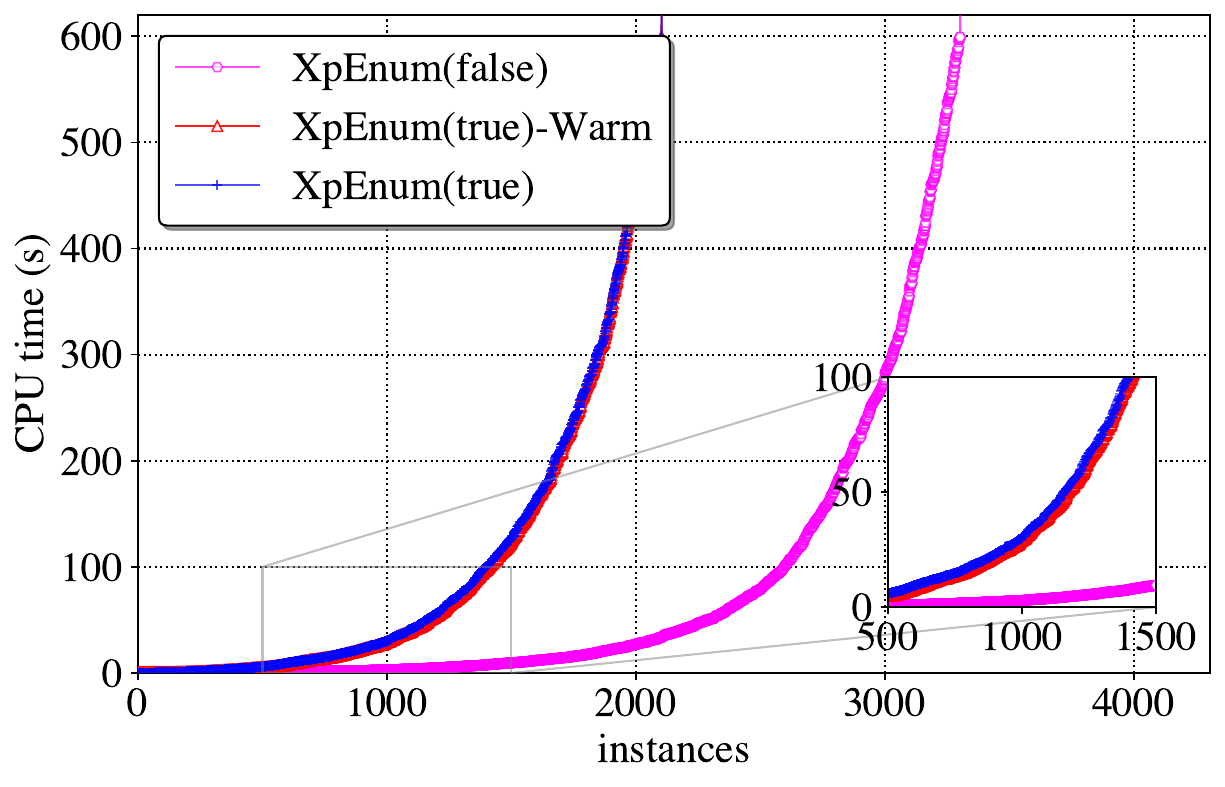}
    \caption{Enumerating all AXps using $\lwa_0^\infty$ for maze.}
    \label{fig:plot_maze_star_all_exp}
  \end{subfigure}
  \caption{Maze with a rejected path and its minimal cardinality CXps using $\lwa_1^1$ and $\lwa_1^\infty$.}
  \label{fig:appendix_maze_all_exp_infty}
\end{figure}

In the enumeration of all AXps shown in Figure~\ref{fig:appendix_maze_all_exp_infty}
we can see that the enumeration of all AXps using
$\lwa_1^\infty$ and $\lwa_0^\infty$ is more challenging than using $\lwa_1^1$
depicted in the main paper not
because because extracting a single AXp is harder, but because there
are more explanations to enumerate. Clearly for these examples targeting
CXps is the best approach.

\begin{table}[ht]
  \caption{Average number of explanations computed where enumeration
  process was completed using languages $\lwa_1^\infty$ and
  $\lwa_0^\infty$.}
  \label{tab:maze_plus_star_finished}
  \centering
  \begin{tabular}{ccccc}
  \toprule
  \multicolumn{1}{c}{}  & \multicolumn{2}{c}{\makecell[c]{$\lwa_1^\infty$}}  & \multicolumn{2}{c}{\makecell[c]{$\lwa_0^\infty$}}   \\

  \cmidrule{2-5}
  \multicolumn{1}{c}{}   &  AXp & CXp & AXp & CXp\\
  \midrule
  \textsc{XpEnum($false$)} &  6.3 & 3339.6 & 6.6 & 4328.5\\
  \midrule
  \textsc{XpEnum($true$)} &  5.0 & 585.8 & 5.0 & 573.4\\
  \midrule
  \textsc{XpEnum($true$)-Warm} & 5.0 &  581.3 & 5.0 & 570.1\\
  \bottomrule

\end{tabular}

\end{table}

Table~\ref{tab:maze_plus_star_finished} shows the average number of
explanations computed for instances where the enumeration process
was completed using languages $\lwa_1^\infty$ and $\lwa_0^\infty$.
In most cases, the number of CXps is higher than the number
of AXps. This is expected because using a languages $\lwa_1^\infty$
or $\lwa_0^\infty$ allows for greater flexibility in replacements,
resulting in more possible ways to fix an input word.
This data reveals the reasons why targeting CXps is the best
approach for the enumeration of all explanations for these
scenarios.
As studied in the main paper, usually targeting CXps is the preferable
when the problems get harder.
Tipically, matching a word against a pattern is
restricted leading to a limited number of AXps, but a large number of CXps.

\mysubsection{Generated corpus.} 

\begin{wrapfigure}[18]{R}{0.55\textwidth}
\centering
    \centering
    \includegraphics[width=0.55\columnwidth]{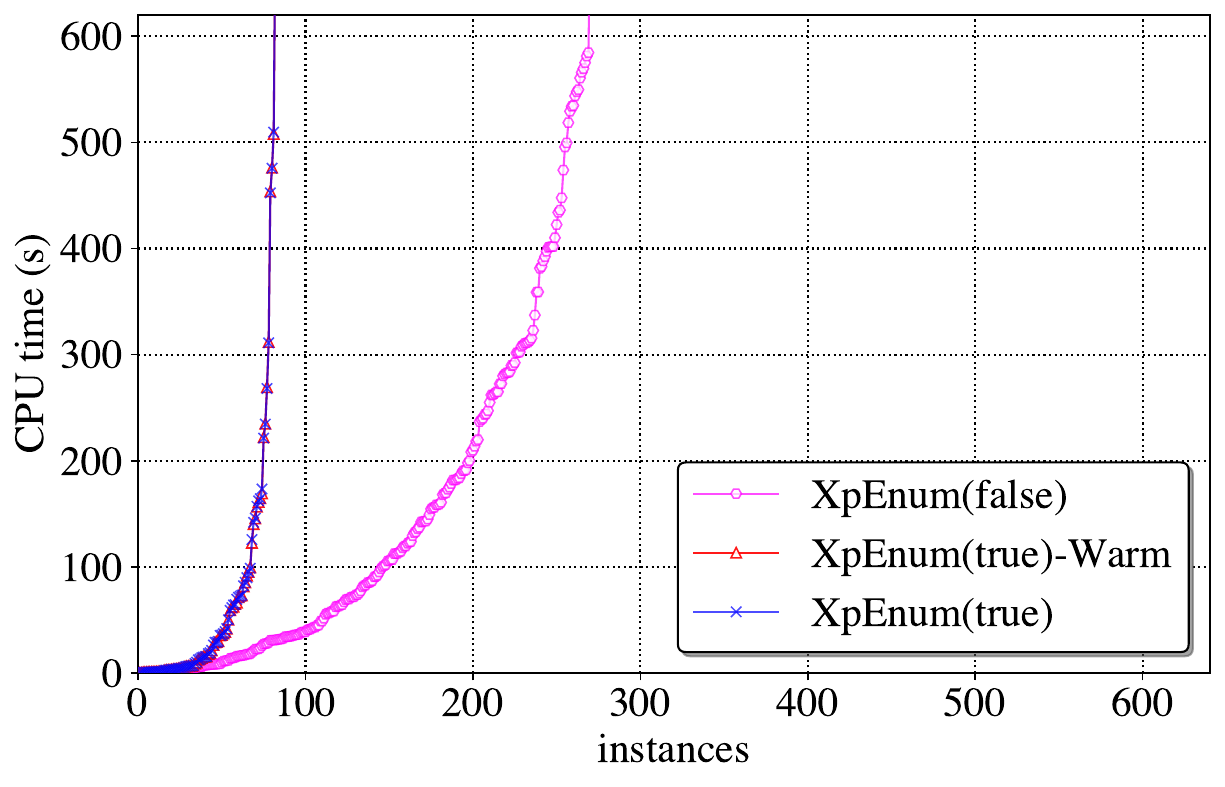}
    \caption{Generated corpus with  \textbf{rejected} words}
    \label{fig:rejected_random}
\end{wrapfigure}

The generated corpus of FAs
is the same as the one presented in the main paper,
and in this section, we analyse the behaviour for rejected words using
language $\lwa_1^1$.
For the generated corpus, we generated $m$ random words $M$ with length $l$ over the
alphabet $\{1,\ldots,d\}$ for $l\in \{5,10,15,20\}$, $m\in
\{1,3,5,10\}$, $d\in \{2,3,5,10\}$.
Figure~\ref{fig:rejected_random} shows the results for the enumeration of all
AXps for the rejected words.
Then, we build an FA for each configuration such that it accepts the
language $\{\Sigma^* w \Sigma^* \mid w \in M\}$.

Many patterns in this corpus are easy to match with long words.
For example, with $l = 5$, $m = 1$, $d = 2$ a possible pattern
is \texttt{(a|b)$^*$abbba(a|b)$^*$}.
A long word that does not match this pattern may have a
large number of didsjoint CXps, as many positions can be
modified to match the pattern.
Impliying that the number of AXps is exponential in the number of CXps
(Proposition~\ref{thm:exponential_appendix}).

Table~\ref{tab:table_dont_finished} shows, 
for the cases that did not finish within the time limit,
the reason for the timeout is the large number of 
CXPs, and even more AXps, often growing exponentially.

\begin{table}[tb]
  \caption{Average number of explanations computed if reaching the timeout.}
  \label{tab:table_dont_finished}
  \centering
  \begin{tabular}{ccc}
    \hline
    \multicolumn{1}{c}{}  & \multicolumn{2}{c}{\makecell[c]{Corpus (rejected)}}   \\

    \cmidrule{2-3}
    \multicolumn{1}{c}{}   &  AXp & CXp \\
    \midrule
    \textsc{XpEnum($false$)} &  254.0 & 1.0\\
    \midrule
    \textsc{XpEnum($true$)} &  51976.7 & 540.2 \\
    \midrule
    \textsc{XpEnum($true$)-Warm} &  52964.1 & 543.0\\
    \bottomrule

  \end{tabular}

\end{table}

\end{document}